%% file: main.tex
\documentclass{article}

 \usepackage[preprint]{neurips_2026}

\usepackage[utf8]{inputenc} 
\usepackage[T1]{fontenc}    
\usepackage{hyperref}       
\usepackage{url}            
\usepackage{booktabs}       
\usepackage{amsfonts}       
\usepackage{nicefrac}       
\usepackage{microtype}      
\usepackage{xcolor}         
\usepackage{amsmath}
\usepackage{amssymb}
\usepackage{amsthm}
\usepackage{wrapfig}
\makeatletter
\@ifundefined{theorem}{\newtheorem{theorem}{Theorem}}{}
\@ifundefined{lemma}{\newtheorem{lemma}[theorem]{Lemma}}{}
\@ifundefined{assumption}{\newtheorem{assumption}[theorem]{Assumption}}{}
\@ifundefined{definition}{}{}
\@ifundefined{proposition}{}{}
\@ifundefined{remark}{}{}
\makeatother

\usepackage{graphicx}
\usepackage{caption}  \usepackage{subcaption}
\usepackage[capitalize,noabbrev]{cleveref}
\usepackage{algorithmic}
\usepackage{algorithm}
\input{math_commands.tex}

\crefname{assumption}{Assumption}{Assumptions}
\Crefname{assumption}{Assumption}{Assumptions}
\crefname{lemma}{Lemma}{Lemmas}
\Crefname{lemma}{Lemma}{Lemmas}
\crefname{theorem}{Theorem}{Theorems}
\Crefname{theorem}{Theorem}{Theorems}
\crefname{definition}{Definition}{Definitions}
\Crefname{definition}{Definition}{Definitions}
\crefname{proposition}{Proposition}{Propositions}
\Crefname{proposition}{Proposition}{Propositions}
\crefname{remark}{Remark}{Remarks}
\Crefname{remark}{Remark}{Remarks}
\crefname{corollary}{Corollary}{Corollaries}
\Crefname{corollary}{Corollary}{Corollaries}

\title{Goal-driven Bayesian Optimal Experimental Design for Robust Decision-Making Under Model Uncertainty}

\author{%
  Jinwoo Go$^{1}$,
  Xiaoning Qian$^{1,2,3}$,
  Byung-Jun Yoon$^{1,2}$\\[0.5em]
  $^{1}$Computing \& Data Sciences, Brookhaven National Laboratory, Upton, NY\\
  $^{2}$Department of Electrical \& Computer Engineering, Texas A\&M University, College Station, TX\\
  $^{3}$Department of Computer Science \& Engineering, Texas A\&M University, College Station, TX\\[0.5em]
  \texttt{\{jgo,xqian1,byoon\}@bnl.gov}
}

\begin{document}

\maketitle

\begin{abstract}
Bayesian optimal experimental design (BOED) selects experiments to maximize information gain about model parameters. However, in decision-critical settings, reducing parameter uncertainty does not necessarily improve downstream decisions, as only specific parameter directions relevant to the objective truly matter. We propose GoBOED, a goal-driven BOED framework that directly optimizes experimental designs for a specified decision-making objective. GoBOED combines an amortized variational posterior surrogate with a differentiable convex decision layer, enabling gradient-based design optimization that is fully decision-focused. We theoretically show that GoBOED gradients are insensitive to parameter directions irrelevant to the decision objective, providing a formal justification for why goal-driven design achieves equivalent decision quality over a wider set of experimental designs than information-gain maximization. Empirically, across source localization, epidemic management, and pharmacokinetic control, GoBOED identifies designs that better align with downstream decision objectives and reveals that near-optimal design windows are substantially wider than those predicted by goal-agnostic BOED approaches.
\end{abstract}

\section{Introduction}
When experiments are expensive, time-consuming, or carry safety risks, it is critical to extract as much useful information as possible from each experiment. \textbf{Bayesian optimal experimental design (BOED)} provides a principled framework for this: it selects experiments that maximally reduce uncertainty in model parameters, as measured by the expected change between prior and posterior distributions  \citep{chaloner1995bayesian, lindley1956measure}.
BOED has been successfully applied across diverse fields including psychology \citep{bach2023experiment} and geophysics \citep{strutz2024variational}. A detailed discussion of related work on BOED is provided in \Cref{app:related_work}.

In parallel, many high-stakes real-world systems require robust decision-making under model uncertainty. For instance, optimal control has been developed for epidemic management \citep{ma2023optimal} and pharmacokinetic (PK) dose optimization \citep{zou2020application}, where model parameters are inherently uncertain. Open-loop control fails under such uncertainty \citep{vitkova2023robust}, and robust control methods \citep{nemirovski2012robust}
 hedge against this uncertainty rather than actively reducing it through experimentation.

\begin{figure*}[t]
    \centering
    \includegraphics[width=0.8\textwidth]{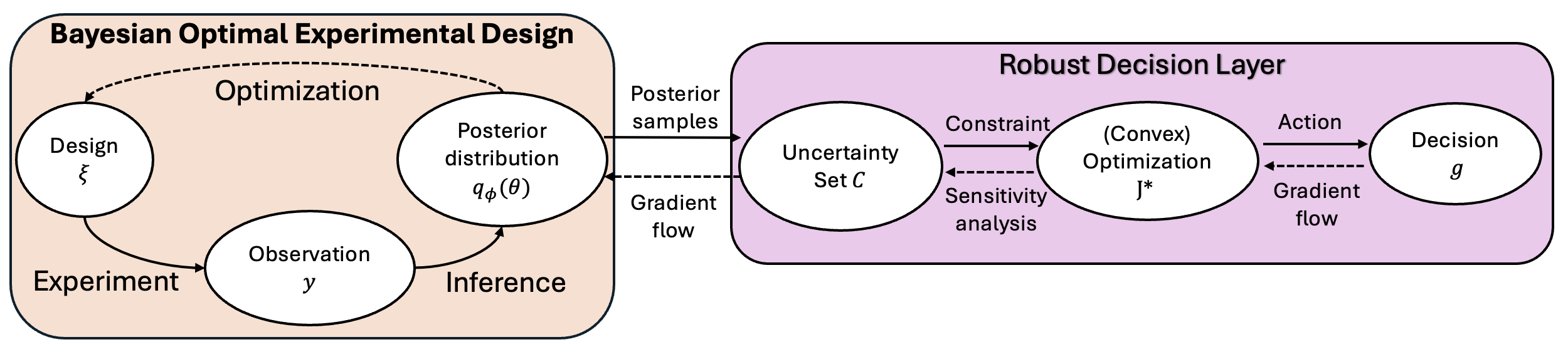}\vspace{-1mm}
    \caption{GoBOED couples Bayesian optimal experimental design (BOED) with a robust decision layer. In the BOED component (left), a design $\xi$ is applied to an experiment, producing observation $y$, which is used by an amortized inference network to update the posterior distribution $q_\phi(\btheta)$. Posterior samples are then passed to the robust decision layer (right), where they define an uncertainty set $\mathcal{C}$ that constrains a convex optimization problem $J^*(y,\xi)$, yielding a decision $\vect{g}$. Sensitivity analysis back-propagates through the convex optimization to identify which directions of posterior uncertainty most affect the decision, and this signal guides the optimization of the design $\xi$. Dashed arrows indicate gradient flow during design optimization through the posterior surrogate and decision layer. 
    }\vspace{-5mm}
    \label{fig:diag}
\end{figure*}

Despite progress in both fields, a key gap remains between BOED and robust decision-making. \textbf{Standard BOED treats all uncertainty equally and aims to design the experiment that reduces the total uncertainty, without regard to which uncertainty actually matters for the decision-making task at hand}. Reducing uncertainty that does not enhance decision-making potentially wastes valuable experimental resources. While some goal-oriented formulations exist (see \Cref{app:related_work}), they do not address decision-making; the closest work \citep{anderson2023experiment} is tied to Gaussian process-specific structure. 

To fill this critical gap, we propose Goal-driven BOED (GoBOED), a framework that couples BOED with robust decision-making to reduce uncertainty in directions that affect downstream decisions most. GoBOED differs from existing decision- and prediction-aware BOED in three concrete ways. First, it preserves the decision as an explicit constrained convex program with plug-in risk functionals, rather than a learned end-to-end policy~\citep{huang2024amortized}. Second, it uses a single amortized variational surrogate trained once offline, avoiding per-design MCMC outer loops~\citep{zhong2024goal} or GP-specific inference models~\citep{anderson2023experiment}. Finally, it directly differentiates the robust optimal value of the downstream control problem rather than an information-theoretic surrogate in parameter or prediction space~\citep{pmlr-v206-bickfordsmith23a, neiswanger2021bayesian}, and we theoretically show that this design 
gradient is insensitive to task-irrelevant parameter directions.

The overall workflow is illustrated in~\Cref{fig:diag}. We study a single-step instantiation, and validate GoBOED on source localization, epidemic management, and pharmacokinetic (PK) control.

Our main contributions are:

\squishlist\vspace{-2mm}
    \item We propose GoBOED, a framework that integrates BOED with robust
    decision-making through a differentiable convex decision layer, enabling experimental designs that directly optimize downstream decision quality.

    \item We provide a theoretical analysis showing that, under a task-relevant linear subspace structure, the GoBOED gradient is insensitive to parameter directions irrelevant to the decision objective
    (\cref{thm:null-space-cancellation}).

    \item We validate GoBOED on source localization, epidemic control, and pharmacokinetic control, showing that decision-focused designs reveal broader near-optimal windows than standard information-gain maximization.
\squishend





\section{Background}

In this section, we present a brief overview of BOED and the goal-driven formulation needed for GoBOED. For clarity, mathematical notations and symbols in this paper are shown in Table~\ref{tab:notation} of Appendix~\ref{app:notation} with additional explanations.

\subsection{Bayesian optimal experimental design (BOED)}
\label{sec:back-boed}
BOED is an information-theoretic approach to the problem of identifying which experiments are most informative. It consists of a prior assumption about unknown model parameters $\btheta \in \Theta$, design variables $\xi \in \Xi$, a forward model $f:\Theta\times\Xi \rightarrow Y$, and observations $y \in Y$ modeled by $y = f(\btheta, \xi) + \epsilon$, where $\epsilon$ denotes noise, e.g., drawn from a Gaussian distribution of known mean and covariance. 

Under this setting, we evaluate experimental designs by computing the expected information gain~(EIG). This involves calculating the KL divergence between the posterior and prior distributions, and taking the expectation of this value with respect to the marginal likelihood. Using Bayes' rule, the KL divergence form can be reformulated to estimate EIG by sample average approximation~(SAA) as 
\begin{equation}
\mathrm{EIG}(\xi) := \mathbb{E}_{p(\btheta)p(y\mid\btheta,\xi)} \left[ \log \frac{p(y\mid\btheta,\xi)}{p(y\mid\xi)} \right], \label{eq:eig}
\end{equation}
with the most informative experiment design computed by solving $\xi^* = \arg\max_\xi \mathrm{EIG}(\xi)$.  

EIG can be estimated via flexible neural approaches \citep{dong2025variational, iollobayesian, kleinegesse2020bayesian}. Here we use variational inference
\citep{foster2019variational, foster2020unified} for its computational efficiency and asymptotic soundness in scientific settings.

\subsection{Goal-driven Problem Formulation}
\label{sec:problem-formulation}

While standard BOED focuses exclusively on parameter estimation, in many real-world applications the inferred parameters are often an intermediate step toward a downstream decision $\vect{g} \in \mathcal{G}$ for minimizing a control cost $J(\vect{g})$. The uncertain parameters do not enter the cost directly, but they govern the physical or safety boundaries of the system through $m$ inequality constraints $c_j(\vect{g}, \btheta) \le 0$ for $j=1,\ldots,m$. Because the true parameters remain uncertain even after observing data $y$ under design $\xi$, the decision must be robust to the updated posterior.

We formalize this by defining the optimal value function $J^\ast(y, \xi)$ of the posterior decision problem as:
\begin{equation}
\begin{split}
J^\ast(y, \xi) = \min_{\vect{g} \in \mathcal{G}} J(\vect{g}) \ \
\mathrm{s.t.} \ \rho_{\btheta \sim p(\btheta\mid y,\xi)} [c_j(\vect{g}, \btheta) ]\le 0, \quad j=1,\ldots,m
\end{split}
\label{eq:opt-theta}
\end{equation}
where $\rho$ is a chosen risk functional — such as expectation, chance constraints, or Conditional Value at Risk~(CVaR) — that maps the posterior uncertainty of each constraint to a deterministic scalar. 

The overarching GoBOED objective then seeks the experimental design $\xi^*$ that minimizes the expected robust decision cost, marginalized over all possible observations before they are collected:
\begin{equation}
\xi^* = \arg\min_{\xi\in\Xi} \; \mathbb{E}_{y \sim p(y\mid \xi)} \! \big[ J^\ast(y, \xi) \big].
\label{eq:opt-design}
\end{equation}
Unlike standard BOED, which maximizes \cref{eq:eig}, \cref{eq:opt-design} is strictly decision-focused. It prioritizes experiments that reduce the eventual robust decision loss the most while accounting for downstream feasibility.

\section{Methods}

We develop an integrated framework, GoBOED, for goal-driven experimental design by bringing together BOED and robust decision-making under model parameter uncertainty through convex optimization. Our starting point is the general goal-driven BOED objective in \cref{eq:opt-theta}, which seeks an experimental design $\xi^*$ that minimizes the expected robust decision cost. In the remainder of this section, we instantiate this objective with a convex robust decision layer, and the overall workflow is illustrated in \cref{fig:diag}.

\subsection{Overview and scope}
\label{sec:overview}

We instantiate GoBOED in a single-step, non-adaptive design setting, in which a design $\xi$ is selected before any data are acquired and a single observation $y$ is used to inform the downstream decision. Sequential and adaptive extensions are discussed in \cref{sec:conclusion}. Because our goal is the quality of the induced decision rather than parameter estimation, the pipeline is organized around three coupled components that together define the end-to-end map from $\xi$ to the robust optimal value $J^\ast(y,\xi)$.

We first introduce an amortized variational posterior surrogate $q_\phi(\btheta \mid y,\xi)$ (\cref{sec:posterior}), which turns a design $\xi$ and its observation $y$ into a posterior distribution over $\btheta$. The posterior samples are then fed into a robust decision layer (\cref{sec:3-rdd}), where posterior uncertainty over $\btheta$ is translated into a deterministic convex optimization problem which calculates the robust optimal value $J^\ast(y,\xi)$. Finally, we differentiate $J^\ast(y,\xi)$ with respect to $\xi$ (\cref{sec:decision-layer}) using the reparameterization trick through the posterior surrogate and implicit differentiation through the convex decision layer.

These choices are the minimal set that makes the full pipeline end-to-end differentiable, and they allow us to decouple offline posterior learning from online design optimization in the two-stage procedure of \cref{sec:two-stage}.

\subsection{Posterior surrogate}
\label{sec:posterior}

We approximate the true posterior $p(\btheta \mid y, \xi)$ with a variational distribution $q_\phi(\btheta \mid y, \xi)$, where $\phi$ denotes the variational parameters. This variational approximation serves two roles: it provides a tractable posterior surrogate for downstream decision-making, and it acts as the proposal for self-normalized importance sampling (IS) used to correct for the bias of $q_\phi$ when evaluating downstream risk functionals. The variational parameters are optimized by maximizing the ELBO: 
\begin{equation}
\mathcal{L}_\text{VI}(\phi; y,\xi) = \mathbb{E}_{q_{\phi}(\btheta \mid y, \xi)}
\left[ \log \frac{p(y \mid \btheta, \xi)p(\btheta)}{q_{\phi}(\btheta \mid y, \xi)}
\right]. \label{eq:SVI}
\end{equation}
We adopt the ELBO rather than forward-KL / conditional log-likelihood objectives~\citep{papamakarios2019sequential}  because the explicit likelihood makes it directly tractable and its mode-seeking behavior yields better-behaved IS weights (see Appendix~\ref{app:is-estimator}).


Once the variational posterior $q_\phi$ is obtained, we use it as the proposal distribution for IS to estimate expectations under the true posterior. For any $f(\btheta)$,
$\mathbb{E}_{p(\btheta \mid y, \xi)}[f(\btheta)] \approx \sum_{i=1}^{N} \tilde{w}(\btheta_{i})\, f(\btheta_{i}), \label{eq:is-theta}$
where $\btheta_{i}\sim q_\phi(\btheta\mid y,\xi)$ and the weights $\tilde{w}(\btheta_i)$ are defined in Appendix~\ref{app:is-estimator}. This estimator is reused throughout \cref{sec:3-rdd}.



\subsection{Robust decision layer}
\label{sec:3-rdd}

Recall from \cref{sec:problem-formulation} that the downstream robust control problem \cref{eq:opt-theta} minimizes the cost $J(\vect g)$ subject to risk-aware constraints evaluated under posterior samples from $p(\btheta \mid y, \xi)$. For the remainder of the paper, we assume $J(\vect g)$ is convex in $\vect g$. This specialization yields a differentiable downstream optimization problem that can be embedded into the GoBOED design loop. Non-convex control objectives generally yield discontinuous solution maps $\vect{g}^*$, which renders Karush–Kuhn–Tucker (KKT)-based Jacobians undefined, and extending GoBOED to such settings is an interesting direction for future work. To maintain the end-to-end differentiability required for GoBOED, we restrict our scope to convex formulations.

Given the posterior uncertainty in $\btheta$, each constraint value $c_j(\vect g,\btheta)$ is a random variable. There is no single universally best notion of robustness, so we consider multiple risk-aware formulations that translate this uncertainty into a deterministic optimization problem. In what follows, we instantiate the general risk functional $\rho$ with three standard choices: mean-based constraints (risk-neutral), chance constraints (probability-level), and CVaR (tail-expectation), providing a spectrum from risk-neutral to tail-robust behavior.

\subsubsection{Mean risk functional}

We approximate the posterior mean $\bar{\btheta}$ using IS with $f(\btheta) = \btheta$, yielding $\bar{\btheta} = \sum_{i=1}^N \tilde{w}_i \btheta_i$, and use this mean directly in the constraint. The resulting formulation, denoted $J^\ast_{\text{mean}}(y, \xi)$, is 
\begin{equation}
J^\ast_{\text{mean}}(y, \xi) = \min_{\vect g} J(\vect g)
\ \ \text{s.t. } c(\vect g, \bar{\btheta}) \le 0,
\label{eq:mean-constraint-theta}
\end{equation}
which can be viewed as a deterministic approximation obtained by enforcing constraints only at the posterior mean $\bar{\btheta}$. While computationally convenient, this formulation does not explicitly control the probability of constraint violation under the full posterior distribution of $\btheta$. 

\subsubsection{Chance constraints risk functional}
\label{sec:4-so}

To explicitly incorporate parameter uncertainty, we consider chance constraints~\citep{charnes1959chance, miller1965chance}, which require critical
conditions to hold with a specified posterior probability. Rather than enforcing feasibility for every sampled parameter, we require the stability condition to
hold with posterior probability at least $\eta$. We define the joint feasible set for a fixed decision $\vect g$ as
\[
\mathcal{C}(\vect g)\;:=\;\{\btheta:\; c_j(\vect g,\btheta)\le 0,
\, \forall j=1,\ldots,m\}.
\]
The chance-constrained formulation, denoted $J^\ast_{\text{CC}}(y, \xi)$, is
\begin{equation}
J^\ast_{\text{CC}}(y, \xi) = \min_{\vect g}\; J(\vect g)
\ \ \text{s.t.} \ \
\mathbb{P}\!\left(\btheta \in \mathcal{C}(\vect g)\mid y,\xi\right) \ge \eta,
\label{eq:chance-constraint-theta}
\end{equation}
where $\eta \in (0,1)$ is the desired confidence level.

To turn \cref{eq:chance-constraint-theta} into a tractable differentiable convex program, we approximate it with a weighted scenario approximation built 
from the posterior surrogate. Given posterior samples $\{\btheta_i\}_{i=1}^N \sim q_\phi(\btheta\mid y,\xi)$ with self-normalized IS weights $\{\tilde{w}_i\}_{i=1}^N$ from \cref{sec:posterior}, we select a subset of scenarios $\mathcal{S}(y,\xi) \subseteq \{1,\ldots,N\}$ whose cumulative posterior weight covers at least the confidence level, 
$\sum_{i \in \mathcal{S}(y,\xi)} \tilde{w}_i \;\ge\; \eta,$
and enforce the constraints jointly on all retained scenarios:
\begin{equation}
J^\ast_{\text{CC}}(y,\xi)
\;\approx\;
\min_{\vect g}\; J(\vect g)
\ \ \text{s.t.}\ \
c_j(\vect g,\btheta_i)\le 0,
\ \ \forall\, i\in\mathcal{S}(y,\xi),\ j=1,\ldots,m.
\label{eq:cc-scenario}
\end{equation}

The scenario set $\mathcal{S}(y,\xi)$ is selected once per $(y,\xi)$ using the weighted coverage rule in \Cref{app:cc-saa} and held fixed during the inner optimization over $\vect g$. Because $J(\vect g)$ is convex in $\vect g$ and each $c_j(\cdot,\btheta_i)$ is convex in $\vect g$ by assumption, \cref{eq:cc-scenario} is a convex program whose solution map admits implicit differentiation through its KKT conditions, consistent with the differentiable decision layer used in \cref{sec:decision-layer}. 
\vspace{-1mm}
\subsubsection{Conditional Value-at-Risk (CVaR) functional}
\label{sec:cvar-method}
\vspace{-1mm}
As an alternative to the scenario-based chance constraint in \cref{sec:4-so}, we control the \emph{tail} of constraint violations using CVaR. For each constraint $c_j(\vect g,\btheta)$, let $\tau_j(\vect g)$ denote the posterior $\eta$-quantile of $c_j(\vect g,\btheta)$ under $\btheta\sim p(\btheta\mid y,\xi)$. $\mathrm{CVaR}_\eta(c_j)$ is the expectation of $c_j(\vect g,\btheta)$ restricted to the upper tail $\{\btheta : c_j(\vect g,\btheta) \ge \tau_j(\vect g)\}$, and we require $\mathrm{CVaR}_\eta(c_j)\le 0$. Because this definition is non-smooth in $\vect g$, we employ the Rockafellar-Uryasev variational representation~\citep{rockafellar2002conditional}, 
\[
\mathrm{CVaR}_\eta\!\big(c_j(\vect g,\btheta)\big)
\;=\;
\min_{\tau_j\in\mathbb{R}}\
\Big\{\,\tau_j + \tfrac{1}{1-\eta}\,
\mathbb{E}\!\left[(c_j(\vect g,\btheta)-\tau_j)_+\right]\Big\},
\]
which lifts $\tau_j$ to a decision variable and yields a jointly convex, differentiable feasible region in $(\vect g,\tau_j)$. The resulting formulation is
\begin{equation} 
\label{eq:cvar-problem-general}
J^\ast_{\text{CVaR}}(y,\xi) = \min_{\vect{g},\,\{\tau_j\}}\; J(\vect{g})
\ \ \text{s.t.}\ \
\tau_j + \tfrac{1}{1-\eta}\,\widehat{\mathbb{E}}\!
\left[(c_j(\vect g,\btheta)-\tau_j)_+\right] \le 0,
\ j=1,\ldots,m,
\end{equation}
where $\widehat{\mathbb{E}}[\cdot]$ denotes the self-normalized IS estimator in \cref{sec:posterior}. The sample-average epigraph reformulation used to implement \cref{eq:cvar-problem-general} as a tractable convex program is given in \Cref{app:cvar-saa}.

The three optimal value functions $J^\ast_{\text{mean}}$, $J^\ast_{\text{CC}}$, and $J^\ast_{\text{CVaR}}$ correspond to the mean-based, chance-constrained, and CVaR-based formulations, respectively. Each is precisely the quantity GoBOED seeks to reduce in \cref{eq:opt-design}, but doing so requires differentiating through the convex decision layer with respect to $\xi$, which we describe next.
\vspace{-1mm}
\subsection{Differentiating the robust optimal value}
\label{sec:decision-layer}
\vspace{-1mm}
To optimize the design $\xi$ for the objectives defined above, we embed the robust control problem as a \emph{differentiable decision layer} inside GoBOED and propagate gradients back to $\xi$ through the frozen posterior surrogate and the convex decision layer. Using the reparameterization trick, each posterior sample is generated via a differentiable path $\btheta_i = h(\epsilon_i; y,\xi,\phi)$ with base noise $\epsilon_i \sim p(\epsilon)$. The total derivative of the robust optimal cost with respect to $\xi$ follows by the multivariate chain rule across all posterior samples, 
\begin{equation}
\label{eq:dJdxi}
\frac{d J^\ast}{d\xi}
= \frac{\partial J^\ast}{\partial \xi}
+ \sum_{i=1}^N \big(\nabla_{\btheta_i} J^\ast\big)^{\!\top}
  \frac{\partial \btheta_i}{\partial \xi},
\end{equation}
where $\partial J^\ast/\partial \xi$ vanishes because $\xi$ enters $J^\ast$ only through the posterior samples $\btheta_i$. The term $\nabla_{\btheta_i} J^\ast$ is obtained via implicit differentiation of the KKT conditions using \texttt{cvxpylayers}~\citep{agrawal2019differentiable}. By the envelope theorem (\cref{app:envelope}), it equals ${\lambda_i^*}^\top \nabla_{\btheta_i} c(\vect g^*, \btheta_i)$, where $\lambda_i^*$ are the KKT multipliers at the optimum. The term $\partial \btheta_i / \partial \xi$ is the pathwise derivative through the amortized encoder (\cref{sec:two-stage}). The IS weights $\tilde w_i$ and scenario set $\mathcal{S}(y,\xi)$ are treated as constants in the backward pass to stabilize the gradient estimator.

\paragraph{Structural property of the decision-layer gradient.}
The gradient in \cref{eq:dJdxi} clarifies which posterior directions actually drive GoBOED design updates. When the constraints depend on $\btheta$ only through a low-dimensional task-relevant linear projection $P^\ast \in \mathbb{R}^{d\times d}$, that is, $c_j(\vect g, \btheta) = \tilde{c}_j(\vect g, P^\ast \btheta)$ for all $\vect g, \btheta, j$, the pathwise derivative through the decision layer is insensitive to the null subspace $I - P^\ast$. 

\begin{theorem}[Task-subspace alignment of the GoBOED gradient]
\label{thm:null-space-cancellation}
Let $P^\ast$ be the orthogonal projection onto the task-relevant subspace $\mathcal{S}^\ast = \mathrm{span}\{\nabla_{\btheta} c_j(\vect g,\btheta)\}$, and let $\btheta^{(i)} = h(y,\xi,\epsilon^{(i)})$ be reparameterized posterior samples
with IS weights treated as fixed in the computation graph. Under Assumptions ~\ref{asm:subspace} and ~\ref{asm:pathwise}, the pathwise gradient satisfies 
\begin{equation}
\label{eq:null-space}
\frac{\partial J^\ast(y,\xi)}{\partial \btheta^{(i)}}\,(I - P^\ast) \;=\; 0,
\qquad
\frac{d J^\ast(y,\xi)}{d \xi}
= \sum_{i=1}^{N}
\frac{\partial J^\ast}{\partial \btheta^{(i)}}\,
P^\ast\,
\frac{\partial \btheta^{(i)}}{\partial \xi}.
\end{equation}
Consequently, posterior uncertainty reduction in the null subspace $(I-P^\ast)\btheta$ contributes zero pathwise gradient signal, and any EIG increase achieved purely by reducing $\mathrm{Var}[(I-P^\ast)\btheta\mid y,\xi]$ leaves the local GoBOED objective unchanged. 
\end{theorem}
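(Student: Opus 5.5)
The plan is to combine the envelope-theorem representation of $\nabla_{\btheta_i} J^\ast$ from \cref{sec:decision-layer} (made precise in \cref{app:envelope}) with the factorization $c_j(\vect g,\btheta)=\tilde c_j(\vect g,P^\ast\btheta)$ granted by Assumption~\ref{asm:subspace}. Assumption~\ref{asm:pathwise} supplies the regularity the envelope argument needs. We take it to mean that the convex decision layer has a unique primal--dual solution $(\vect g^\ast,\lambda^\ast)$, depending differentiably on the samples, and that $h$ is differentiable in $\xi$.

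\textbf{Step 1 (envelope representation).} Treat the IS weights and the scenario set $\mathcal S(y,\xi)$ as constants, as stipulated. Then every instantiation of the decision layer is a parametric convex program in which $\btheta^{(i)}$ enters only through constraint functions of the form $c_j(\vect g,\btheta^{(i)})$. This covers the mean formulation \cref{eq:mean-constraint-theta}, where it enters via $\bar\btheta=\sum_i\tilde w_i\btheta^{(i)}$, as well as the scenario program \cref{eq:cc-scenario} and the CVaR epigraph form of \Cref{app:cvar-saa}. The envelope theorem then gives
\[
\frac{\partial J^\ast}{\partial\btheta^{(i)}}=\sum_{j} \alpha_{ij}\,\lambda_j^{\ast}\,\nabla_{\btheta}c_j(\vect g^\ast,\btheta)^\top\big|_{\btheta=\btheta^{(i)}},
\]
where the $\alpha_{ij}\ge 0$ are fixed weights. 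For the mean case $\alpha_{ij}=\tilde w_i$. In the CVaR case the weights also include the subgradient of $(\cdot)_+$; there we use the epigraph variables, so each sample's constraint is smooth and has its own multiplier.

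\textbf{Step 2 (row space lies in $\mathcal S^\ast$).} The chain rule applied to $c_j=\tilde c_j(\vect g,P^\ast\btheta)$ gives
\[
\nabla_\btheta c_j^\top = \nabla_{z}\tilde c_j(\vect g,P^\ast\btheta)^\top P^\ast .
\]
Because $P^\ast$ is an orthogonal projection, $P^\ast(I-P^\ast)=0$. Every summand in Step 1 therefore annihilates $I-P^\ast$, which is the first identity in \eqref{eq:null-space}. The same conclusion also follows directly from the definition $\mathcal S^\ast=\mathrm{span}\{\nabla_\btheta c_j\}$: each gradient row lies in $\mathcal S^\ast$ and is thus orthogonal to $\mathrm{range}(I-P^\ast)$. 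The factorization is useful only because it guarantees that $\mathcal S^\ast$ does not depend on $(\vect g,\btheta)$.

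\textbf{Step 3 (design gradient).} Substitute $I=P^\ast+(I-P^\ast)$ into \cref{eq:dJdxi}, using $\partial J^\ast/\partial\xi=0$. The $(I-P^\ast)$ part vanishes by Step 2, which yields the second identity. For the consequence, split $\btheta^{(i)}=P^\ast\btheta^{(i)}+(I-P^\ast)\btheta^{(i)}$. A design perturbation that alters only the null-subspace component $(I-P^\ast)h$, and hence only $\mathrm{Var}[(I-P^\ast)\btheta\mid y,\xi]$, has $P^\ast\,\partial\btheta^{(i)}/\partial\xi=0$. Its directional derivative of $J^\ast$ is therefore zero, even when the EIG changes. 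Taking the expectation over $y$ by exchanging derivative and expectation (licensed by Assumption~\ref{asm:pathwise}) extends this to the objective \cref{eq:opt-design}.

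\textbf{Main obstacle.} I expect the hardest part to be Step 1 for the chance-constraint and CVaR layers. There the map $\btheta\mapsto J^\ast$ is only piecewise smooth, because of active-set changes and the kink in $(\cdot)_+$, so the envelope identity needs uniqueness of the multipliers and strict complementarity. I would first try to show that these conditions hold under Assumption~\ref{asm:pathwise} at almost every sample. Failing that, I would state the result for any element of the Clarke generalized gradient: each such element is a convex combination of vectors of the Step 1 form, and each of those annihilates $I-P^\ast$.
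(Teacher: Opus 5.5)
Your proof of the two identities in \cref{eq:null-space} follows the paper's proof. The envelope theorem writes $\partial J^\ast/\partial\btheta^{(i)}$ as a multiplier-weighted combination of constraint gradients, which lie in $\mathrm{Im}(P^\ast)$ by Assumption~\ref{asm:subspace}; then $P^\ast(I-P^\ast)=0$ removes the null-space component, and splitting $\partial\btheta^{(i)}/\partial\xi$ into its $P^\ast$ and $I-P^\ast$ parts in \cref{eq:dJdxi} gives the second identity. Your layer-by-layer bookkeeping and the Clarke-gradient fallback are more careful than the paper, which simply invokes the envelope theorem under Assumption~\ref{asm:pathwise}; for the mean layer the gradient is evaluated at $\bar{\btheta}$ rather than $\btheta^{(i)}$, which is harmless.

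The one step that fails is the last sentence of Step~3. There you extend the conclusion to $L(\xi)=\mathbb{E}_{y\sim p(y\mid\xi)}[J^\ast(y,\xi)]$ by ``exchanging derivative and expectation''. The sampling measure $p(y\mid\xi)$ itself depends on $\xi$, so $\nabla_\xi L\neq\mathbb{E}[dJ^\ast/d\xi]$. There is an extra score-function term $\mathbb{E}[J^\ast(y,\xi)\,\nabla_\xi\log p(y\mid\xi)]$, exactly as in \cref{eq:total-gradient}. That term involves $J^\ast$ itself, not $\partial J^\ast/\partial\btheta^{(i)}$, so no projection identity acts on it. It is generically nonzero even along a design direction that, for every fixed $y$, moves only $(I-P^\ast)\btheta^{(i)}$. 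Such a perturbation still changes the predictive distribution of $y$, and so reweights which observations, and hence which task-relevant posteriors, occur. Assumption~\ref{asm:pathwise} only concerns differentiability of $J^\ast$ in the samples and boundedness of the KKT multipliers. It gives you neither the interchange nor the vanishing of this term. Reparameterizing $y$ as well, via $\btheta^{(i)}=h(y(\xi),\xi,\epsilon^{(i)})$, does not rescue the argument:
\begin{itemize}
\item it is unavailable for the Poisson SIQR likelihood;
\item it requires freezing the IS weights and $\mathcal{S}(y,\xi)$ with respect to $y$ too;
\item it turns the null-direction condition into one on the total derivative $d\btheta^{(i)}/d\xi$.
\end{itemize}
So state the ``consequence'' for $J^\ast(y,\xi)$ at fixed $y$, i.e.\ for the pathwise part of the design gradient. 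That is all the paper's proof establishes.
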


The theorem and its proof are given in \cref{app:structural}. \Cref{thm:null-space-cancellation} establishes this alignment for the
linear projection case. In the non-linear setting, the constraint Jacobian $\nabla_{\btheta} c_j(\vect g^\ast, \btheta^{(i)})$ varies with each posterior sample and a closed-form cancellation no longer holds in general. We treat a formal non-linear extension as future work. The central mechanism is validated empirically in \cref{sec:result}.
\vspace{-3mm}
\subsection{Two-stage optimization}
\label{sec:two-stage}
\vspace{-3mm}
Computing the gradient in \cref{sec:decision-layer} requires both the posterior samples $\btheta_i$ and their sensitivities $\partial\btheta_i/\partial\xi$, both of which are provided by an amortized variational posterior surrogate. While one could optimize $(\xi,\phi)$ jointly~\citep{NEURIPS2020_d3d94468,pmlr-v15-lacoste_julien11a}, the single-step setting allows us to decouple posterior learning from design optimization, keeping the downstream robust objective tractable.
\vspace{-3mm}
\paragraph{Stage 1: offline amortized VI.}

Motivated by \citet{huang2024amortized}, we train a single amortized variational network $q_\phi(\btheta\mid y,\xi)$ that maps a design and observation pair $(\xi,y)$ to a variational posterior, and then reuse the trained network during design optimization. The network is trained once by maximizing the ELBO in \cref{eq:SVI} averaged over simulated $(\xi,y)$ pairs to obtain $\phi^*$. The amortizer outputs variational parameters for a reparameterizable family, from which posterior samples are drawn via the reparameterization introduced in \cref{sec:decision-layer}. This yields stable Monte Carlo gradient estimates during training~\citep{burda2015importance,foster2020unified,foster2019variational} and enables the pathwise derivatives $\partial\btheta_i/\partial\xi$ used during design optimization.
\vspace{-3mm}
\paragraph{Stage 2: online design optimization.}

We freeze $\phi^*$ and optimize $\xi$ for downstream robust decision-making:
\begin{equation}
\label{eq:stage2-obj}
\xi^* = \arg\min_{\xi\in\Xi}\; L(\xi),
\qquad
L(\xi) := \mathbb{E}_{y \sim p(y\mid\xi)}\!\big[J^\ast(y,\xi;\phi^*)\big],
\end{equation}
where $J^\ast(y,\xi;\phi^*)$ denotes the optimal value of the robust control problem computed using the frozen posterior surrogate $q_{\phi^*}(\btheta\mid y,\xi)$. For notational simplicity, we drop the explicit dependence on $\phi^*$ and the specific risk functional, writing $J^\ast(y,\xi)$ throughout the remainder of the paper.

Combining the log-derivative trick on the outer expectation with the pathwise derivative from \cref{eq:dJdxi} yields the total gradient 
\begin{equation}
\label{eq:total-gradient}
\nabla_\xi L(\xi) = \mathbb{E}_{y \sim p(y\mid\xi)}\!\bigg[\,
\frac{d J^\ast(y,\xi)}{d\xi}
+ J^\ast(y,\xi)\,\nabla_\xi \log p(y\mid\xi) \,\bigg].
\end{equation}
The first term $dJ^\ast/d\xi$ follows from \cref{eq:dJdxi}. The second is the score-function estimator for any differentiable observation model $p(y\mid\btheta,\xi)$, with the specific form used in our experiments given in \cref{sec:marlikgrad}. Together, these estimators provide stable gradients for decision-focused design optimization. A high-level summary of the full pipeline is given in \cref{alg:goboed}.
\vspace{-3mm}
\paragraph{Computational benefit.}
 Amortization lets us learn a single network that maps $(\xi,y)$ to approximate posterior parameters once, and then reuse this surrogate during design optimization to obtain both posterior samples and their sensitivities without re-solving a full posterior inference problem for every candidate $\xi$ and every gradient step. This is critical because each candidate additionally requires solving a convex robust decision problem and backpropagating through its KKT conditions, so avoiding repeated VI approximations substantially reduces the overall cost of design optimization.
\vspace{-3mm}
\section{Results}\vspace{-3mm}
\label{sec:result}
We present numerical experiments on three use cases to evaluate goal-driven experimental design for robust decision-making: an intuitive 2D source-localization problem and two dynamical systems under model uncertainty, namely epidemic management based on the SIQR model and pharmacokinetic (PK) control. Detailed model specifications, and parameter values are provided in \Cref{app:models}.
\vspace{-3mm}
\paragraph{Scope of baselines.}

We compare against EIG-based BOED. Direct comparison with \citet{zhong2024goal}, \citet{huang2024amortized}, and BAX~\citep{neiswanger2021bayesian} is not apples-to-apples: the first targets a scalar predictive QoI via per-design MCMC, the second outputs a learned policy that cannot accommodate plug-in chance/CVaR constraints, and the third optimizes mutual information with a computable property rather than a robust control cost. Adapting any of them to constrained robust control would itself be a methodological contribution.

\vspace{-3mm}
\paragraph{Source localization with a single sensor.}
As an intuitive example, we consider a two-dimensional source-location problem in which a single sensor must be placed to localize an emitting source and support a downstream intervention decision. The unknown parameter $\btheta$ encodes the coordinates of a point source, and the design variable is the sensor location $\xi \in \mathbb{R}^2$. Following~\cite{foster2021deep}, we observe a noisy scalar intensity measurement of approximately $1/\|r\|^2$, where $r = \btheta - \xi$, with an angular observation $r/\|r\|$.

\begin{figure}[t]
\centering
\begin{subfigure}[t]{0.24\columnwidth}
  \centering
  \includegraphics[width=\linewidth]{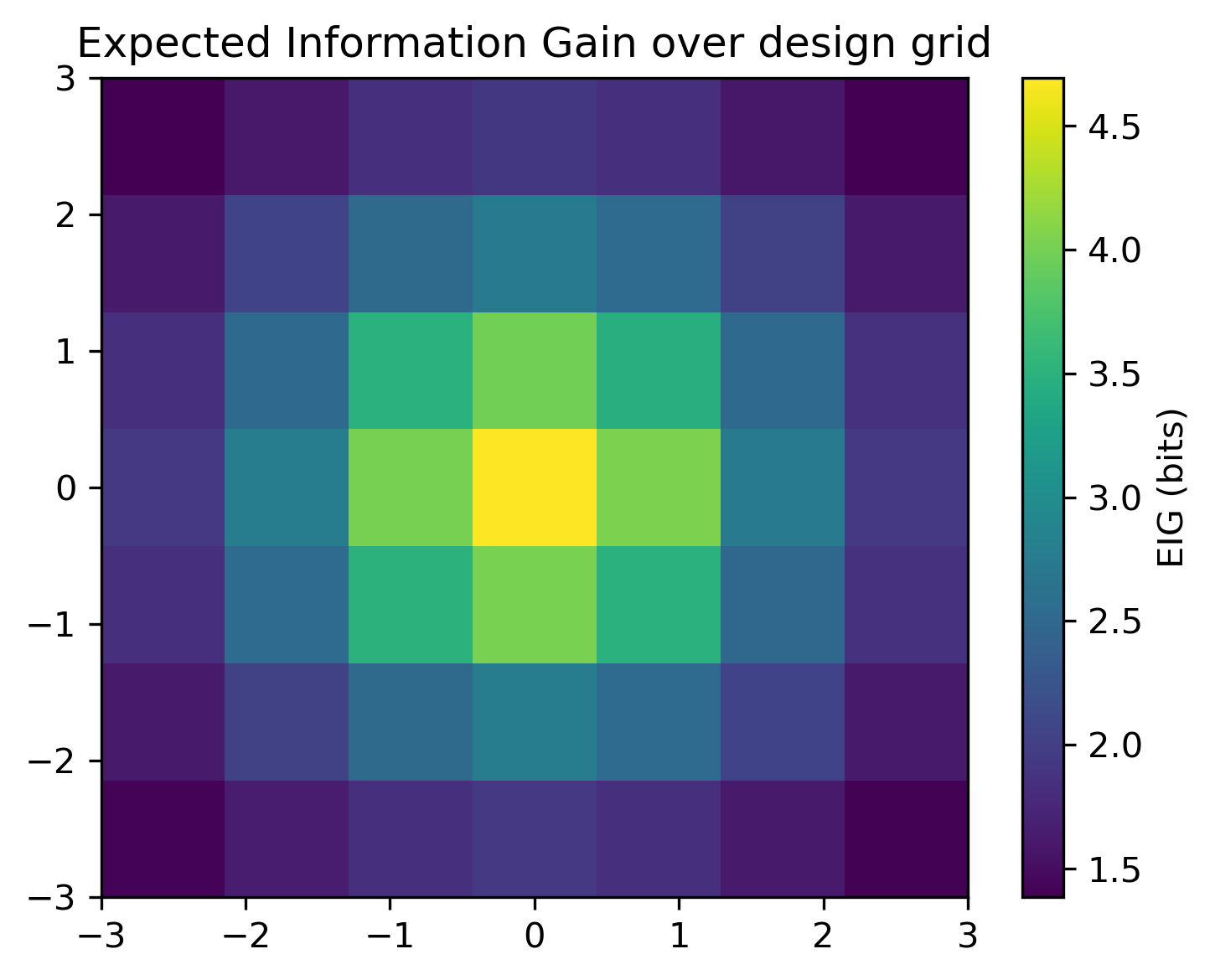}\vspace{-2mm}
  \caption{EIG over $\xi$.}
  \label{fig:loc-eig}
\end{subfigure}
\hfill
\begin{subfigure}[t]{0.24\columnwidth}
  \centering
  \includegraphics[width=\linewidth]{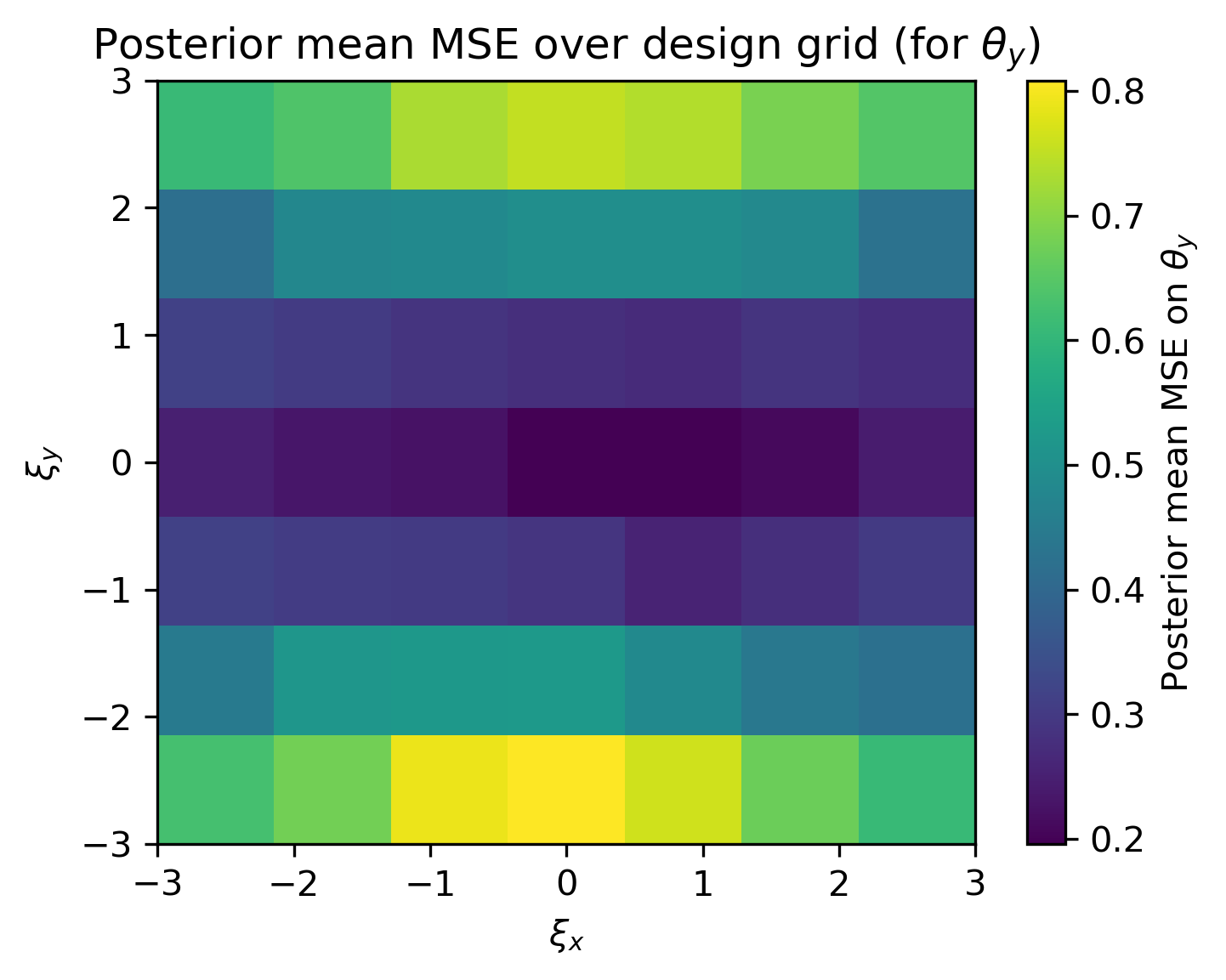}\vspace{-2mm}
  \caption{MSE for $\btheta_y$.}
  \label{fig:loc-mse}
\end{subfigure}
\hfill
\begin{subfigure}[t]{0.23\columnwidth}
  \centering
  \includegraphics[width=\linewidth]{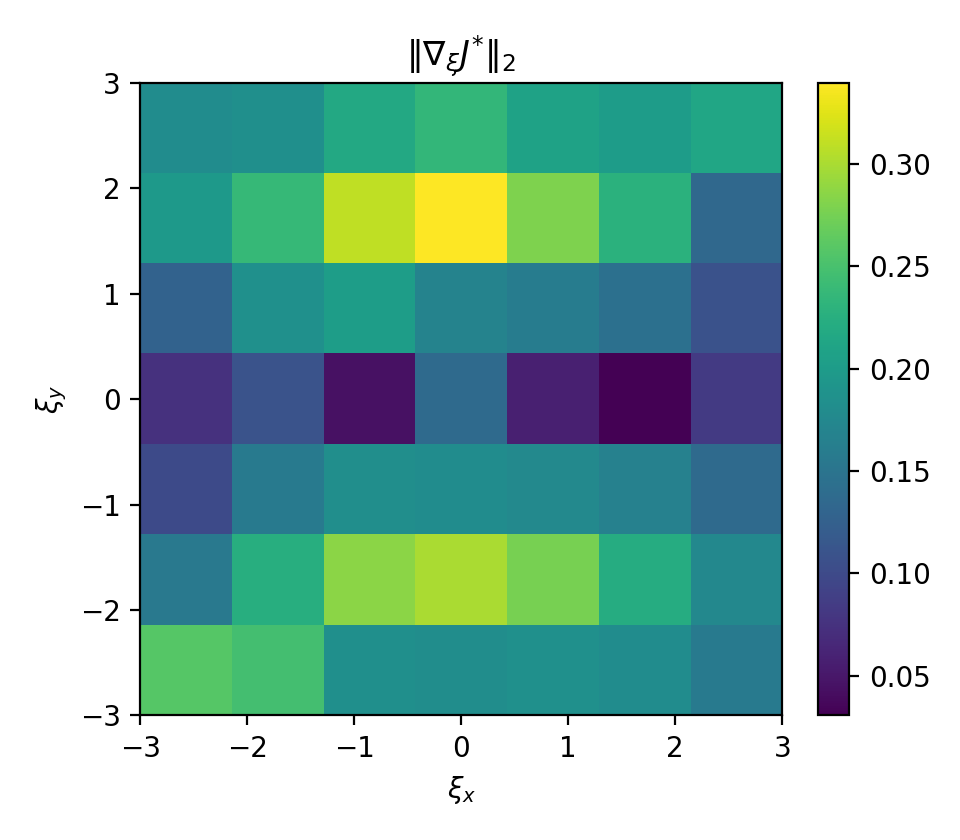}\vspace{-2mm}
  \caption{$\|\partial J^\ast/\partial \xi\|_2$.}
  \label{fig:dj-dxi}
\end{subfigure}
\hfill
\begin{subfigure}[t]{0.23\columnwidth}
  \centering
  \includegraphics[width=\linewidth]{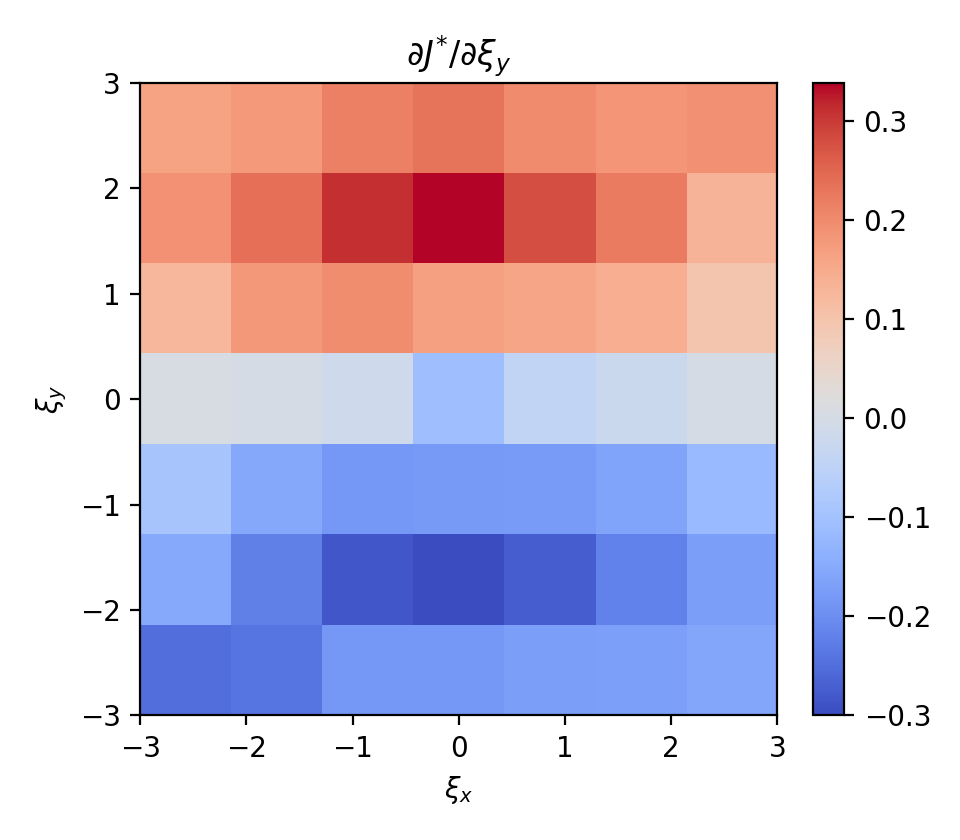}\vspace{-2mm}
  \caption{$\partial J^\ast/\partial \xi_y$.}
  \label{fig:dj-dxiy}
\end{subfigure}
\vspace{-1mm}
\caption{
  Source-location toy problem on a $7\times 7$ grid of sensor locations $\xi$.
  (a) EIG is maximized near the center of the domain.
  (b) The decision-focused metric (posterior mean MSE of the source $y$-coordinate) is minimized along a broad horizontal band, yielding a wider near-optimal region for GoBOED than for standard EIG-based design.
  (c) Gradient norm $\|\partial J/\partial \xi\|_2$, confirming that the gradient signal is concentrated near the center of the domain.
  (d) $y$-component of the gradient $\partial J/\partial \xi_y$, which is the dominant term consistent with the decision-focused objective.
}
\label{fig:loc-grid}
\vskip -0.2in
\end{figure}

On a $7\times 7$ grid of candidate sensor locations, we compare standard BOED, which maximizes EIG about the source position, with GoBOED, which optimizes a decision-focused objective measuring the accuracy of the recovered source's vertical coordinate. For EIG, larger values indicate more informative designs, whereas for the $\theta_y$ MSE objective, smaller values indicate better performance ($10{,}000$ outer samples and $10{,}000$ inner samples per design). 

\Cref{fig:loc-eig} shows the EIG surface over sensor locations, while \Cref{fig:loc-mse} displays the corresponding posterior mean squared error for $\btheta_y$. The EIG peaks sharply at the center of the domain, whereas the decision-focused objective exhibits a broad horizontal valley of near-optimal designs. We further validate \cref{thm:null-space-cancellation} by examining the gradient. \Cref{fig:dj-dxi} shows that the gradient norm $\|\partial J/\partial \xi\|_2$ is dominated by the $y$-component (\Cref{fig:dj-dxiy}), consistent with \cref{thm:null-space-cancellation} that only the decision-relevant coordinate drives the design gradient. 

This example mirrors the qualitative behavior observed in our larger case studies, but with a simple loss. We now turn to SIQR and PK models, where the downstream decision is a higher-dimensional optimal control policy obtained by solving a convex program at each candidate design, and where the control actions influence the nonlinear system dynamics in a complex way.

\vspace{-3mm}
\paragraph{GoBOED with SIQR and PK models.} To compare GoBOED with standard BOED baselines, we study the problem of choosing a single observation time. Let $T$ denote the time horizon and let $\boldsymbol{\xi}\in\{0,1\}^T$ be a one-hot design vector with $\sum_{t=1}^T \xi_t=1$.
The unique index $t^*$ with $\xi_{t^*}=1$ is the chosen measurement time, at which we collect one datum and update the posterior over model parameters. This formulation applies to both the SIQR and PK settings, as both are time-indexed. At $t^*$, we observe the counts of asymptomatic and symptomatic infections for the SIQR model and the drug concentration in blood for the PK model, with measurement noise modeled as Poisson or Gaussian, respectively.

BOED via EIG targets maximal reduction in parameter uncertainty, whereas the robust decision-making objective is sensitive to particular parameter combinations and system dynamics. Accordingly, for each candidate design $\boldsymbol{\xi}$ we compute and visualize both the EIG and the robust optimal control cost over the entire design space. \Cref{fig:combined-fig} visualizes the resulting design objective
surfaces for the SIQR (top) and PK (bottom) models, where higher EIG and lower control cost indicate better designs. For SIQR, the control cost represents the economic cost under the optimal quarantine rates, and for PK, it represents the minimum dose required to reach the target concentration. For infeasible prior samples, where no feasible quarantine rate (SIQR) or optimal dose (PK) exists under the chance constraint, we pad the
cost with the mean over feasible trajectories. A principled penalization is left to future work, as our focus here is on the overall sample trend.

We estimate EIG (cf. \cref{eq:eig}) using nested Monte Carlo with $5{,}000$ outer samples (over $y$) and $3{,}000$ inner samples for the marginal likelihood. The BOED-selected optimal observation times are day~\textbf{5} for the SIQR model and hour~\textbf{17} for the PK model.

\vspace{-3mm}
\paragraph{Robust optimal control in the presence of model uncertainty.} We solve the chance-constrained problem in \cref{eq:chance-constraint-theta}
under the posterior induced by a given observation time, requiring a $90\%$ probability of constraint satisfaction for SIQR and $80\%$ for PK. For SIQR, the chance-constrained 
objective is relatively flat for observation times between days 7 and 10
(\Cref{fig:combined-fig}). For PK, observing later (roughly 15--23 hours)
reduces the dose required to meet therapeutic targets. For both models,
the robust optimal control objective selects designs that differ from the
EIG-optimal one and yields a broader near-optimal window (a range of observation times achieving objective values close to the optimum), providing
flexibility in the choice of observation time. We further demonstrate via
gradient-based search that GoBOED recovers designs within this window
(\Cref{fig:combined-fig}). We draw $500$ datasets $y$ and, for each $y$,
$40$ posterior samples of $\bm{\theta}$ (see \Cref{app:mc-budgets}).

\vspace{-3mm}
\paragraph{CVaR-based constraints.} 
We also consider the CVaR formulation (\Cref{eq:cvar-problem-general}), setting $\eta = 0.9$ for SIQR and $\eta = 0.7$ for PK. The qualitative trends mirror those under chance constraints across both models (\Cref{fig:combined-fig}), confirming that the broad near-optimal windows are robust to the choice of risk functional. We use the same sampling budgets as in the chance-constrained
experiments (see \Cref{app:mc-budgets}).
 
\begin{figure*}[t]
    \centering

    \begin{subfigure}[b]{0.32\textwidth}
        \includegraphics[width=\textwidth]{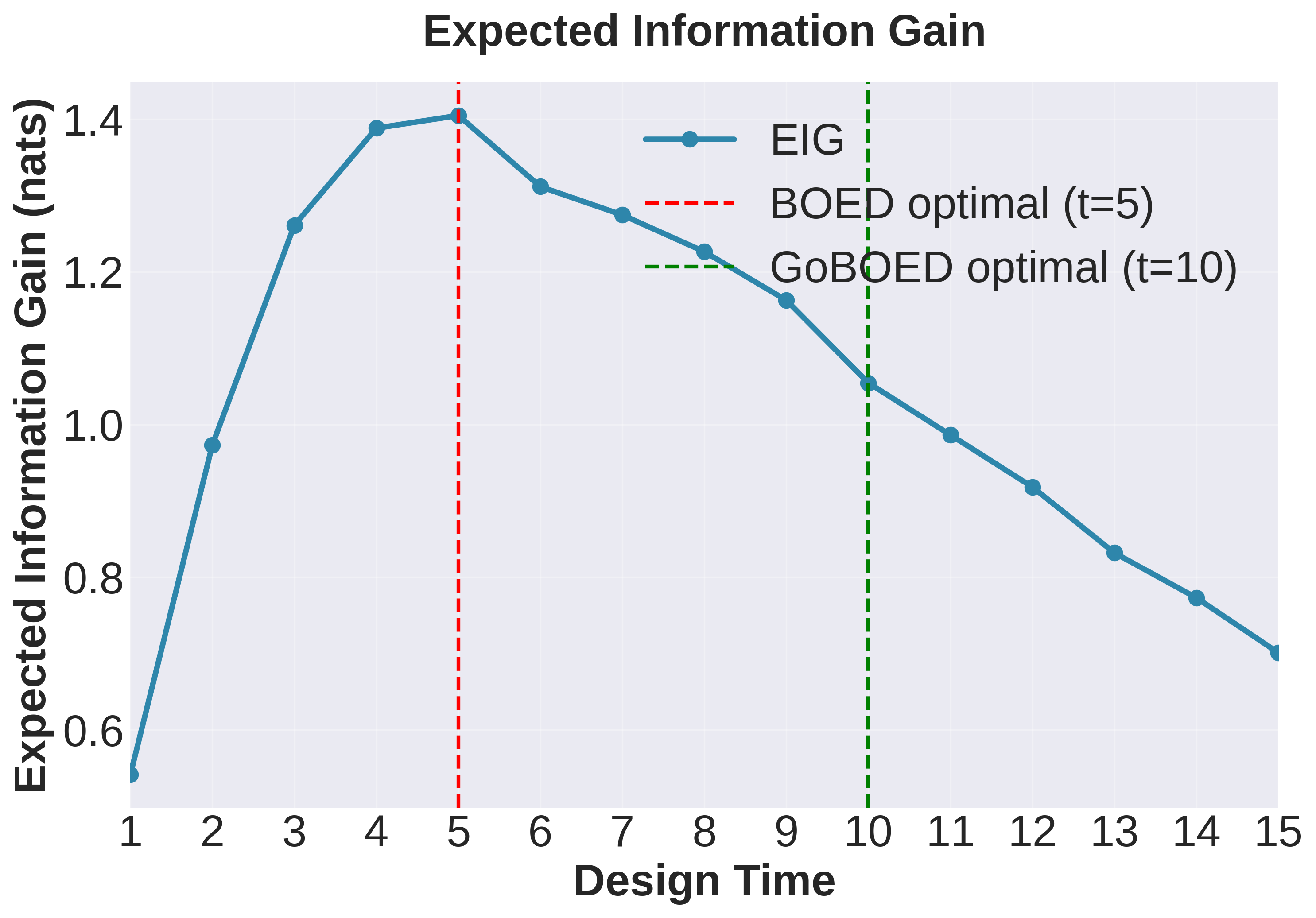}\vspace{-1mm}
        \caption{EIG (SIQR)}
    \end{subfigure}\hfill
    \begin{subfigure}[b]{0.32\textwidth}
        \includegraphics[width=\textwidth]{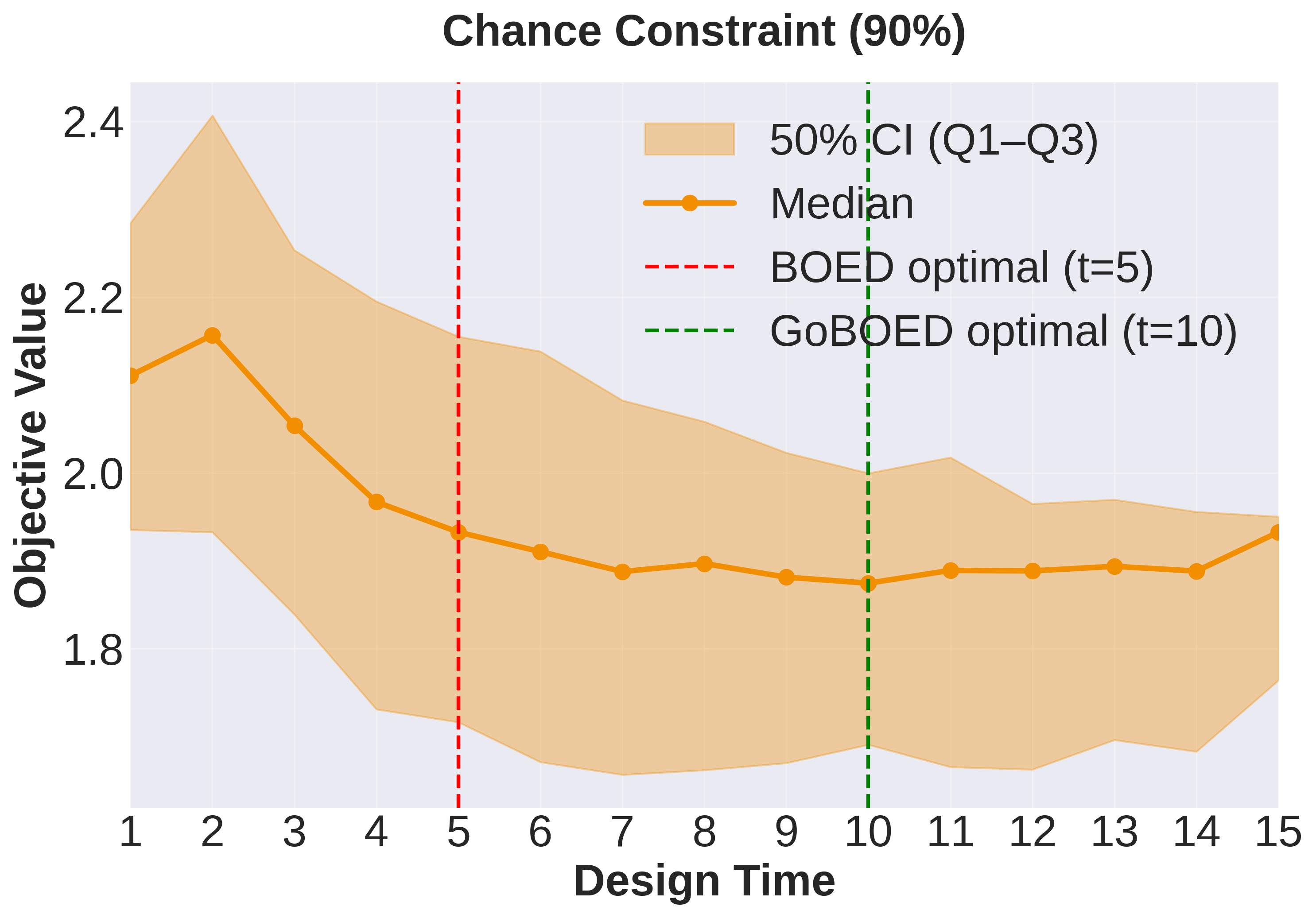}\vspace{-1mm}
        \caption{Chance constraints (SIQR)}
    \end{subfigure}\hfill
    \begin{subfigure}[b]{0.32\textwidth}
        \includegraphics[width=\textwidth]{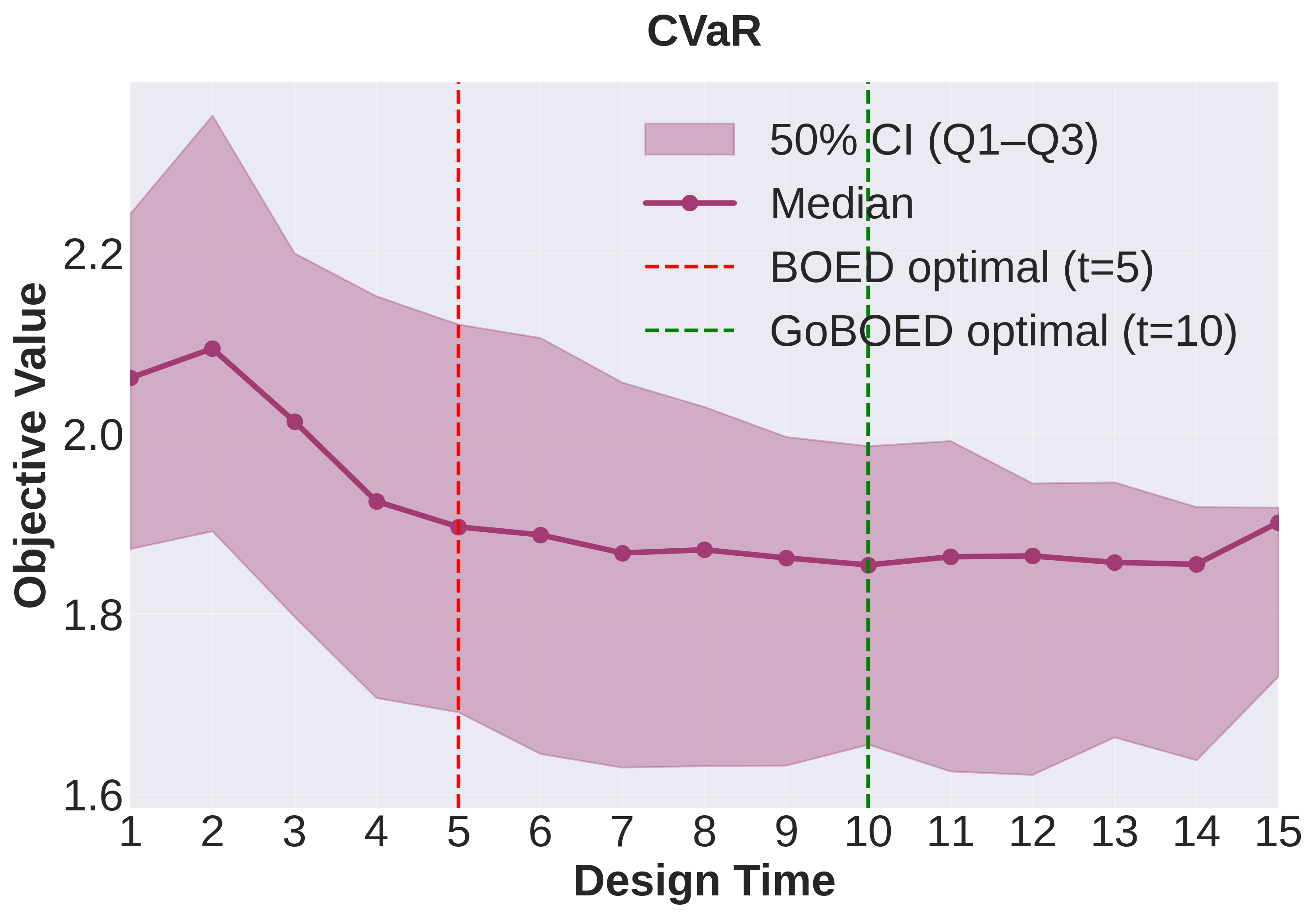}\vspace{-1mm}
        \caption{CVaR (SIQR)}
    \end{subfigure}

    \vspace{2mm}

    \begin{subfigure}[b]{0.32\textwidth}
        \includegraphics[width=\textwidth]{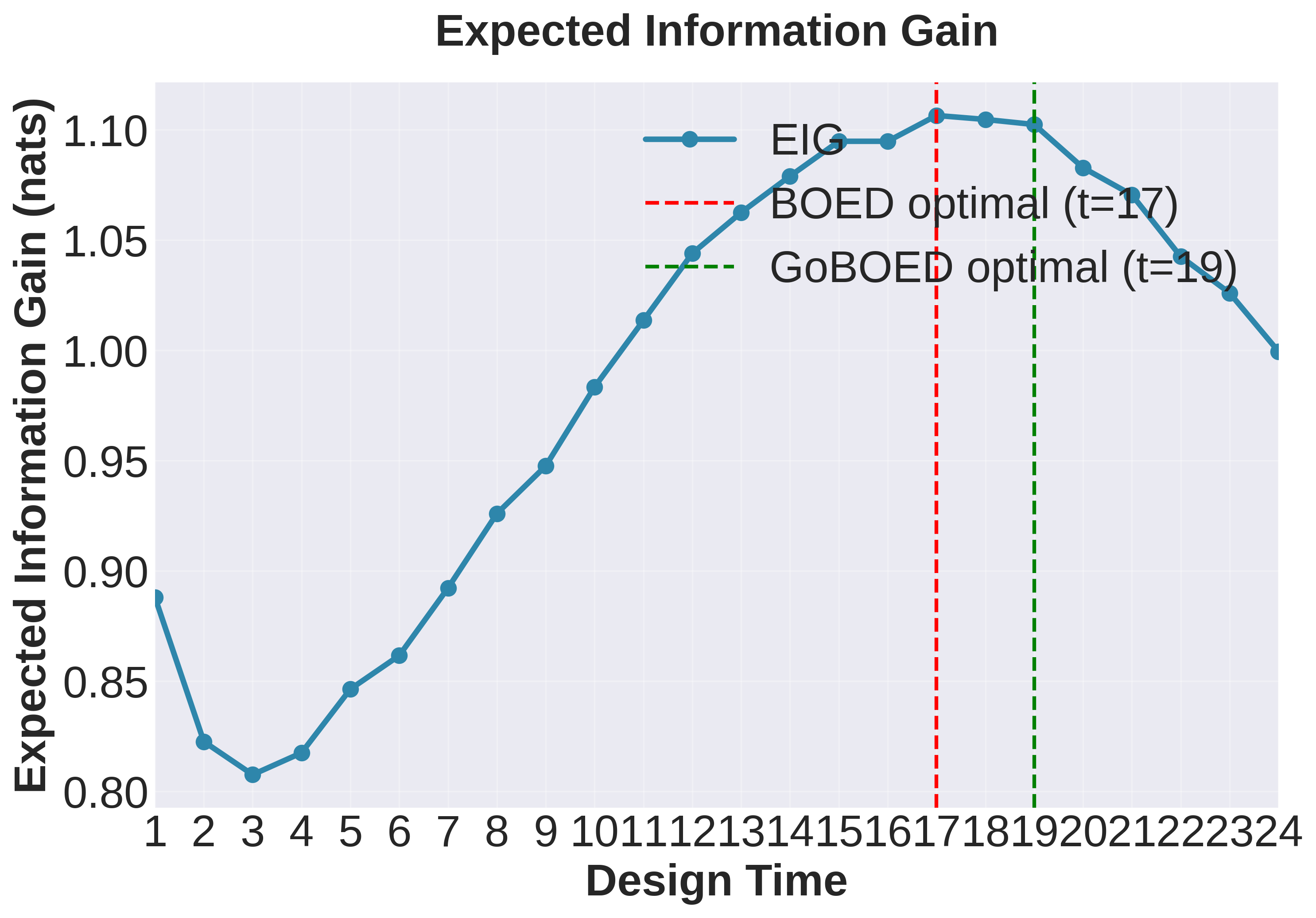}\vspace{-1mm}
        \caption{EIG (PK)}
    \end{subfigure}\hfill
    \begin{subfigure}[b]{0.32\textwidth}
        \includegraphics[width=\textwidth]{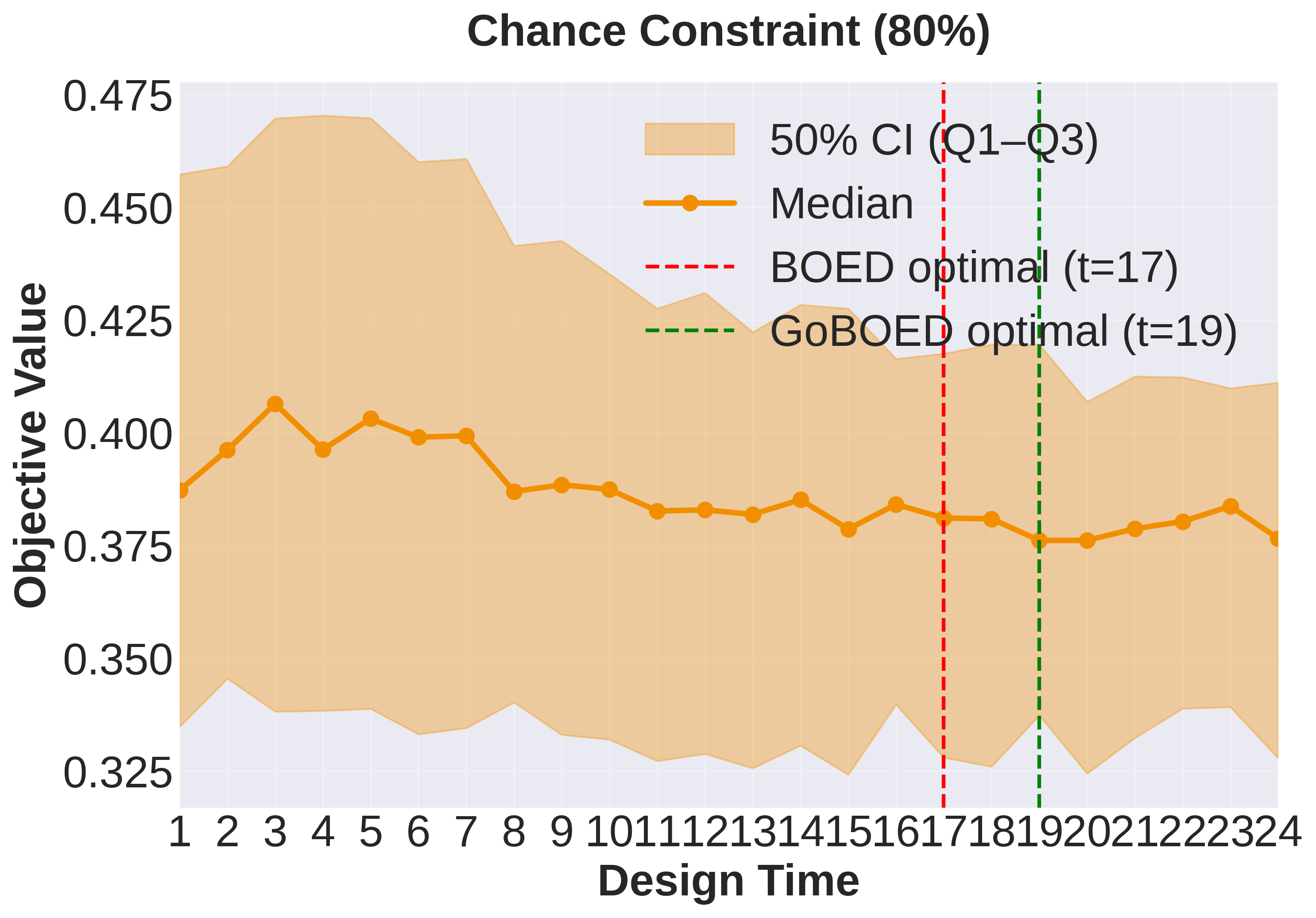}\vspace{-1mm}
        \caption{Chance constraints (PK)}
    \end{subfigure}\hfill
    \begin{subfigure}[b]{0.32\textwidth}
        \includegraphics[width=\textwidth]{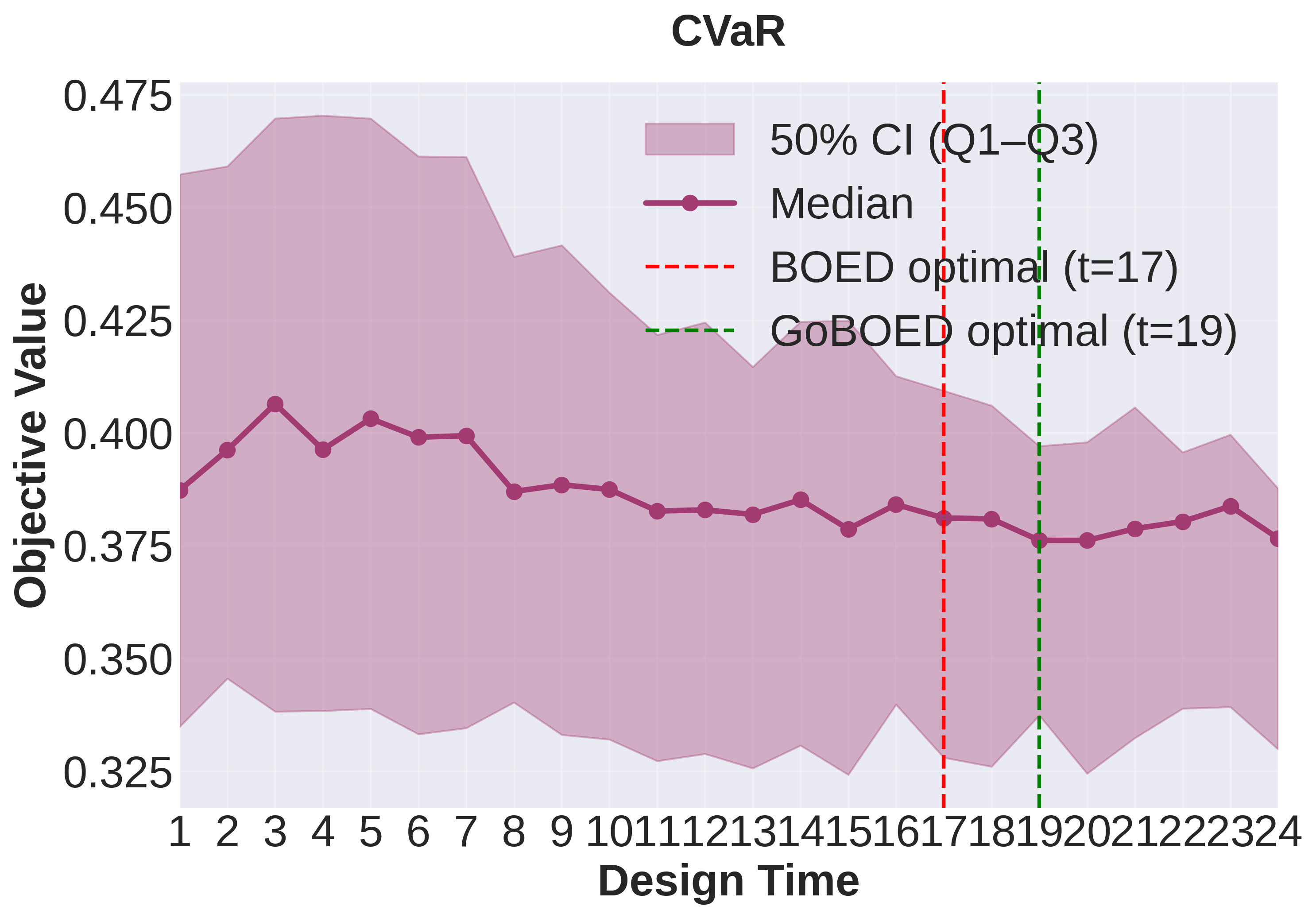}\vspace{-1mm}
        \caption{CVaR (PK)}
    \end{subfigure}

    \vspace{-1mm}
    \caption{Comparison of experimental design metrics and control strategies across two models. Top row: SIQR epidemiological model: (a) EIG over observation time $\xi$, (b) expected optimal cost under chance constraints, (c) expected optimal cost under CVaR. 
    Bottom row: PK model: (d) EIG, (e) chance constraints, (f) CVaR. The horizontal axis is observation time $\xi$. The BOED-optimal design typically pinpoints a single time, whereas the goal-driven robust objective admits a broader near-optimal window, offering greater scheduling flexibility under real-world constraints.} \vspace{-5mm}
    \label{fig:combined-fig}
\end{figure*}

These results highlight a key insight: maximizing EIG alone does not necessarily optimize constraint-aware economic performance. Instead, our approach identifies a broad near-optimal window for
SIQR, approximately days~7--11, within which observation timing can be chosen with minimal loss in both information gain and economic efficiency. For PK dose optimization, we similarly find a wide near-optimal window (about 15--23~hours). Later observations (toward 22--23~hours) generally yield lower robust cost than the BOED-optimal 17~hours while achieving comparable EIG. This scheduling flexibility is practically valuable when exact observation timing is restricted by logistical or operational constraints, and reflects the central insight of GoBOED: aligning experimental design with the downstream decision objective reveals a broader set of near-optimal designs than standard EIG maximization. 
\vspace{-3mm}
\paragraph{Posterior visualization.}
We provide marginal posterior plots under the EIG-optimal and GoBOED designs for both models in \Cref{sec:additionalvis}. GoBOED produces sharper and shifted posteriors along the parameter directions most influential for the downstream control objective, while leaving uncertainty on other parameters comparable to the EIG-optimal design.

\vspace{-3mm}
\paragraph{Gradient-based search over discrete times.}

With a slight abuse of notation, we treat the design $\xi \in [1, T]$ as a scalar and optimize it directly with automatic differentiation (AD). Starting from the midpoint $\xi_0 = \lfloor (T+1)/2 \rfloor$, we compute the AD gradient of the robust objective $J(\xi)$ and take projected gradient steps, 
\[
s_k \;=\; \frac{\partial J(t)}{\partial \xi}\Big|_{\xi=\xi_k}, \
\xi_{k+1}\;=\;\Pi_{[1,T]}\!\bigl(\xi_k-\omega_k\, s_k\bigr), \
\hat \xi_{k+1}\;=\;\mathrm{round}(\xi_{k+1}),    
\]
where $\Pi_{[1,T]}$ clips to the feasible interval and $\omega_k$ is the step size.  After each update we evaluate the design at the integer index $\hat \xi_{k+1}$ by recomputing the posterior induced by a single observation at $\hat \xi_{k+1}$ and re-evaluating $J(\hat \xi_{k+1})$. We repeat these steps until convergence.

For SIQR, the procedure converges to a design in the near-optimal window, with $\xi = 10$ and $\xi \in \{7, 11\}$ yielding similarly small gradient norms. For PK, it selects $\xi = 22$ under both the CVaR and chance-constrained criteria, consistent with \Cref{fig:combined-fig}. The gradient norm decreases monotonically across iterations.
\vspace{-3mm}

\paragraph{Computational cost}

\begin{wraptable}{r}{0.48\textwidth}

\vspace{-4mm}

\caption{Online wall-clock cost per robust-objective evaluation. Offline amortizer training excluded.}

\label{tab:runtime}

\vspace{-1mm}
\centering
\scalebox{0.96}{
\begin{tabular}{llrrr}
\toprule
Task & Risk & Fwd (s) & Diff (s) & Total (s) \\
\midrule
SIQR & CC   & 350.86 & 35.67 & 386.53 \\
SIQR & CVaR & 352.85 & 42.11 & 394.96 \\
PK   & CC   &   4.10 &  1.29 &   5.39 \\
PK   & CVaR &  15.46 &  0.62 &  16.08 \\
\bottomrule
\end{tabular}}
\vspace{-3mm}

\end{wraptable}

Table~\ref{tab:runtime} reports the online wall-clock cost of a single robust-objective evaluation with the amortized posterior fixed. For SIQR, one evaluation costs approximately 387–395 seconds, dominated by the setup/forward pass ($\approx$351–353 s). Exhaustive enumeration over the 15 evaluated SIQR candidate observation days would therefore require roughly $15\times387\approx5{,}805$ seconds. For PK, per-evaluation cost is lower (5.4 s for CC, 16.1 s for CVaR), and across both tasks our gradient-based optimizer converges in 3–5 projected gradient steps, which requires only a small multiple of a single evaluation rather than a full grid sweep. The posterior surrogate is trained once offline and reused across all gradient steps, so the online cost reduces to a small number of convex solves and KKT backward passes.
\vspace{-3mm}
\section{Conclusion}
\label{sec:conclusion}
\vspace{-3mm}

We introduced \textbf{GoBOED}, a goal-driven BOED framework that couples offline amortized posterior learning with decision-focused optimization of experimental designs under posterior uncertainty. By pairing a reusable posterior surrogate with a differentiable convex decision layer equipped with plug-in risk functionals (chance, CVaR), GoBOED selects experiments based on downstream robust decision quality rather than information gain alone. We further showed that the resulting design gradient is provably  insensitive to task-irrelevant parameter directions (\cref{thm:null-space-cancellation}). Across source localization, epidemic management (SIQR), and pharmacokinetic control, GoBOED yields broader near-optimal design windows and better control outcomes than standard EIG-based design.

We focused on a single-step instantiation in which the posterior surrogate is trained offline and the design is optimized separately, a decoupling that keeps gradient-based optimization tractable and the pipeline modular, but is in principle compatible with end-to-end joint training. Our analysis assumes a convex downstream decision layer, which enables sensitivity analysis via Lagrangian differentiation. Non-convex objectives generally admit only local guarantees. Performance also depends on the quality of the variational posterior, since inaccurate inference can degrade downstream decisions. Extending GoBOED to fully sequential adaptive designs, richer posterior families such as diffusion- and flow-based models, and end-to-end joint training are promising future directions~\citep{huang2024amortized, foster2021deep, ivanova2021implicit, ivanova2024stepdad}.

\bibliographystyle{plainnat}
\bibliography{example_paper}


\appendix

\section{Notations}
\label{app:notation}

Table~\ref{tab:notation} provides the detailed list of mathematical notations and symbols adopted in GoBOED. 

\begin{table}[h!]
\centering
\small
\begin{tabular}{ll}
\toprule
\textbf{Symbol} & \textbf{Description} \\
\midrule
\multicolumn{2}{l}{\textbf{Bayesian model and experimental design}} \\
\midrule
$\btheta \in \Theta$      & Unknown model parameters (e.g., SIQR or PK parameters) \\
$\xi \in \Xi$             & Experimental design (e.g., observation time) \\
$y \in \mathcal{Y}$       & Observation generated under design $\xi$ \\
$p(\btheta)$              & Prior over model parameters \\
$p(y \mid \btheta,\xi)$   & Likelihood (forward model plus observation noise) \\
$p(y \mid \xi)$           & Marginal likelihood under design $\xi$ \\
$f(\btheta,\xi)$          & Forward model mapping $(\btheta,\xi)$ to noiseless output \\
$\epsilon$                & Observation noise term \\
$\mathrm{EIG}(\xi)$       & Expected information gain objective for design $\xi$ \\
$D_{\mathrm{KL}}(\cdot\Vert\cdot)$ & Kullback--Leibler divergence \\
\midrule
\multicolumn{2}{l}{\textbf{Variational inference and amortizer}} \\
\midrule
$q_\phi(\btheta \mid \xi,y)$
  & Variational approximation to $p(\btheta \mid \xi,y)$ with parameters $\phi$ \\
$\phi$                    & Parameters of the amortized VI network \\
$\psi_\phi(\xi,y)$        & Variational parameters output by the amortizer \\
$\mathcal{L}_{\text{VI}}(\phi; y,\xi)$
  & Evidence lower bound (ELBO) for $(y,\xi)$ \\
$h_\phi(\varepsilon;\psi_\phi(\xi,y),\xi,y)$
  & Reparameterization map turning base noise $\varepsilon$ into $\btheta$ samples \\
$w_i, \tilde{w}_i$        & Importance weights and their normalized versions for sample $i$ \\
\midrule
\multicolumn{2}{l}{\textbf{Robust control and risk measures}} \\
\midrule
$\vect{g}$ & Decision / control variables (e.g., quarantine rates or dose) \\
$J(\vect{g} )$      & Application cost for decision $\vect{g}$ \\
$\rho(\cdot)$              & Risk functional (expectation, chance constraint, or CVaR) \\
$\eta$                     & Confidence / risk level for chance constraints or CVaR \\
$\tau$                     & Auxiliary threshold variable in CVaR formulation \\
$s_i$                      & Slack variable for CVaR constraint for sample $i$ \\
$\lambda_i$                & Lagrange multiplier for constraint of sample $i$ (decision layer) \\
\midrule
\multicolumn{2}{l}{\textbf{Optimization and design objective}} \\
\midrule
 $L(\xi)$ & Design loss based on the expected robust optimal value with frozen posterior surrogate \\
$c_j(\vect{g};\btheta) \le 0, \quad j=1,\ldots,m$  & Constraints on control variable for posterior sample $\btheta$ \\
$\mathcal{C}(\vect g)$  & Joint feasible parameter set for decision $\vect g$ \\
$\tau_j(\vect g)$ & $\eta$-quantile (VaR) threshold of $c_j(\vect g;\btheta)$\\
$\mathcal{D}_j(\vect g)$ & Upper-tail (violation) set of parameters for constraint $j$ under $\vect g$ \\
$\omega_\xi, \omega_\phi$  & Step sizes for updating $\xi$ and $\phi$ in gradient-based optimization \\
$t$                        & Discrete observation time index (when $\xi$ encodes timing) \\
\bottomrule
\end{tabular}
\caption{Summary of notations used for Bayesian optimal experimental design (BOED), variational
inference, and robust decision-making in GoBOED.}
\label{tab:notation}
\end{table}

\subsection{Self-normalized importance-sampling estimator}
\label{app:is-estimator}
Given samples $\{\theta^{(i)}\}_{i=1}^{N} \sim q_\phi(\theta \mid y, \xi)$
with unnormalized weights
$w_i = p(y|\theta^{(i)},\xi)\,p(\theta^{(i)})\;/\;q_\phi(\theta^{(i)}|y,\xi)$,
define $\tilde{w}_i = w_i / \sum_j w_j$.
The IS estimate of $\mathbb{E}_{p(\theta|y,\xi)}[f(\theta)]$ is
\begin{equation}
{\mathbb{E}}[f(\theta)] = \sum_{i=1}^{N} \tilde{w}_i\, f(\theta^{(i)}).
\label{eq:is-estimator}
\end{equation}
This estimator is reused throughout
Appendices~\ref{app:siqr} and~\ref{app:pk}.

\subsection{Risk functionals: tractable implementations}
\label{app:risk-impl}

\paragraph{Chance-constraint scenario selection.}
\label{app:cc-saa}
Select $S(y,\xi) \subseteq \{1,\ldots,N\}$ as the smallest set
(ordered by descending $\tilde{w}_i$) satisfying
$\sum_{i \in S} \tilde{w}_i \geq \eta$.
The chance-constrained program (Eq.~7) enforces
$c_j(\vect g,\theta^{(i)}) \le 0$ for all $i \in S(y,\xi)$.

\paragraph{CVaR epigraph reformulation.}
\label{app:cvar-saa}
Introducing slack $s_i \ge 0$ and threshold $\tau_j$,
$\mathrm{CVaR}_\eta(c_j) \le 0$ becomes:
\begin{equation}
\tau_j + \frac{1}{1-\eta}\sum_{i=1}^{N}\tilde{w}_i s_i \le 0,
\quad s_i \ge c_j(\vect g,\theta^{(i)}) - \tau_j, \quad s_i \ge 0.
\label{eq:cvar-epigraph}
\end{equation}
This form is used in both Appendix~\ref{app:siqr} and~\ref{app:pk}.

\subsection{Envelope theorem and implicit differentiation}
\label{app:envelope}
For a parametric convex program with Lagrange multipliers $\lambda^*$ at optimality:
\[
\nabla_\theta J^\ast(y,\xi) = \lambda^{*\top} \nabla_\theta c(\vect g^*, \theta).
\]
Jacobians are obtained via implicit differentiation of the KKT conditions
using \texttt{cvxpylayers}~\citep{agrawal2019differentiable}.
Used in Appendices~\ref{app:siqr},~\ref{app:pk}, and~\ref{app:structural:proof}.

\section{Algorithmic and inference details}
\label{app:alg}
\subsection{GoBOED algorithm (pseudocode)}
\label{app:goboed-alg}

Algorithm~\ref{alg:goboed} summarizes the two-stage GoBOED procedure:
offline training of the amortized variational posterior, followed by
gradient-based optimization of the design $\xi$ under the robust
decision-making objective.

\begin{algorithm}[t]
\caption{GoBOED: Goal-driven Bayesian Optimal Experimental Design}
\label{alg:goboed}
\footnotesize
\begin{algorithmic}[1]
\REQUIRE Prior $p(\btheta)$, design space $\Xi$, forward model $p(y \mid \btheta,\xi)$,
         robust control problem (cost $J(g)$, constraints $c_j(g,\theta)$),
         amortized VI network $q_\phi(\btheta \mid y,\xi)$, step sizes $\omega_\phi,\omega_\xi$,
         Monte Carlo batch size $M$ for design optimization
\ENSURE Approximate optimal design $\xi^*$ and variational parameters $\phi^*$

\STATE \textbf{Offline: train amortized variational inference}
\STATE Initialize variational parameters $\phi$
\WHILE{not converged}
  \STATE Sample $\xi \sim p(\xi)$, $\btheta \sim p(\btheta)$, $y \sim p(y \mid \btheta,\xi)$
  \STATE Compute ELBO $\mathcal{L}_{\text{VI}}(\phi; y,\xi)$ as in~\eqref{eq:SVI}
  \STATE Update $\phi \gets \phi + \omega_\phi \,\nabla_\phi \mathcal{L}_{\text{VI}}(\phi; y,\xi)$
\ENDWHILE

\STATE \textbf{Design optimization: decision-focused loop}
\STATE Define design loss $L(\xi) = \mathbb{E}_{p(y\mid\xi)}\!\big[J^*(y,\xi)\big]$
\STATE Initialize design $\xi$ (e.g.\ mid-point of allowed time window)
\WHILE{not converged}
  \STATE Set Monte Carlo estimate $\hat L(\xi) \gets 0$
  \FOR{$m = 1,\dots,M$}
    \STATE Sample $y^{(m)} \sim p(y \mid \xi)$
    \STATE Build posterior $q_\phi(\btheta \mid y^{(m)},\xi)$ and draw $\btheta$ via reparameterization
    \STATE Solve the robust decision-making problem under $q_\phi(\btheta \mid y^{(m)},\xi)$
           via the decision layer, obtaining $J^*(y^{(m)},\xi)$
    \STATE $\hat L(\xi) \gets \hat L(\xi) + J^*( y^{(m)},\xi)$
  \ENDFOR
  \STATE $\hat L(\xi) \gets \hat L(\xi)/M$
  \STATE Compute stochastic gradient $g_\xi \gets \nabla_\xi \hat L(\xi)$
         using reparameterization + implicit differentiation
  \STATE Update $\xi \gets \xi - \omega_\xi \, g_\xi$
\ENDWHILE
\STATE \textbf{return} $\xi^* \gets \xi$, $\phi^* \gets \phi$
\end{algorithmic}
\end{algorithm}

\subsection{Variational network architecture}
\label{app:NN-arch}
Let $\xi\in\Xi$ denote the design and $y\in\mathcal{Y}$ the observation.
Both are mapped into a shared latent space using linear tokenizers that
produce $M$ tokens of width $d$:
\[
E_\xi:\mathbb{R}^{D_\xi}\!\to\mathbb{R}^{M\times d},\quad
E_y:\mathbb{R}^{D_y}\!\to\mathbb{R}^{M\times d},\qquad
Z_\xi=E_\xi(\xi),\; Z_y=E_y(y),
\]
where in PK we set $D_\xi=1$, $D_y=1$, $M=8$, and $d=64$.
Queries and keys are linear projections of the token sequences,
\[
q_j=W_q\,z_{y,j}\in\mathbb{R}^d,\qquad
k_i=W_k\,z_{\xi,i}\in\mathbb{R}^d,
\]
and values fuse per-token key and query latents via a small MLP,
\[
v_i=\Psi\!\bigl([\,z_{\xi,i},\,z_{y,i}\,]\bigr)\in\mathbb{R}^d,\qquad i,j\in\{1,\dots,M\}.
\]
Single-head dot-product cross-attention over tokens is
\[
a_{ji}=\frac{\exp\!\bigl(q_j^\top k_i/\sqrt{d}\bigr)}{\sum_{i'=1}^M \exp\!\bigl(q_j^\top k_{i'}/\sqrt{d}\bigr)},
\qquad
c_j=\sum_{i=1}^{M} a_{ji}\,v_i\in\mathbb{R}^d.
\]
We mean-pool the query contexts and pass through a light trunk MLP:
\[
s=\frac{1}{M}\sum_{j=1}^{M} c_j,\qquad
h=\mathrm{MLP}_{\mathrm{trunk}}(s)\in\mathbb{R}^d.
\]
Two heads with skip connections produce log-space parameters for a diagonal LogNormal posterior,
\[
\begin{aligned}
\tilde{\mu}    &= W_{\ell,2}\,\phi(W_{\ell,1}h) + W_{\ell,\mathrm{skip}}h,\\
\tilde{\sigma} &= W_{s,2}\,\phi(W_{s,1}h) + W_{s,\mathrm{skip}}h,
\end{aligned}
\]
which we bound elementwise:
\[
\mu    \;=\; \mu_0 \;+\; \Delta_{\max}\,\tanh(\tilde{\mu}),\qquad
\sigma \;=\; \sigma_{\min} \;+\; (\sigma_{\max}-\sigma_{\min})\,\boldsymbol{\sigma}\!\bigl(\tilde{\sigma}\bigr).
\]
Here $\phi$ is GELU, $\mu_0$ is the prior log-mean (used as a residual center),
$\Delta_{\max}$ bounds deviations, $0<\sigma_{\min}<\sigma_{\max}$ bound the
log-space standard deviations, and $\boldsymbol{\sigma}$ is sigmoid activation function.

\subsection{Optimization and solvers}

We model the semidefinite programs arising in the decision layer and
solve them using SCS~\citep{scs} with default settings. For comparison
and smaller instances, we also solved the problems with the
interior-point solver MOSEK~\citep{mosek} and obtained numerically
indistinguishable solutions.

\subsection{Implementation and training configuration}
\label{app:training}

We train all amortized variational posteriors $q_\phi(\cdot \mid y,\xi)$
offline, following Algorithm~\ref{alg:goboed}. For each experiment we
sample design--observation pairs $(\xi,y)$ from the prior predictive
model and optimize the ELBO in \cref{eq:SVI} using stochastic gradient
descent.

We use the AdamW optimizer \citep{loshchilov2017decoupled} for all experiments, with learning rates and Monte Carlo sample budgets tuned per task. The designs $\xi$ are chosen on a fixed grid over the design space (7$\times$7 grid points for source localization, 15 early observation days for the SIQR experiment, and 24 points for the PK experiment). \Cref{tab:training-config} summarizes the training configuration for each experiment. Gradients are computed via AD through the forward models and the decision layer.
\begin{table}[h]
\centering
\caption{Per-experiment training configuration for the amortized variational posterior.}
\label{tab:training-config}
\begin{tabular}{lcccc}
\toprule
Experiment & Learning rate & Outer samples & Inner samples & Epochs \\
\midrule
Source localization & $3\times 10^{-3}$ & $256$  & $16$ & $1,000$\\
SIQR                & $3\times 10^{-4}$ & $128$           & $128$ & $1,000$ \\
PK                  & $1\times 10^{-3}$ & $1{,}000$      & $400$ & $1,000$\\
\bottomrule
\end{tabular}
\end{table}

For source localization we use mini-batches of $256$ $(\xi,y)$ pairs per gradient step with $16$ reparameterized posterior samples. For SIQR we use $128$ outer samples over $y$ and $128$ inner posterior samples per outer sample. For PK we use $1{,}000$ outer samples and $400$ inner posterior samples drawn from $q_\phi(\cdot \mid y,\xi)$. We use $1,000$ epochs for all problems.

\subsection{Evaluation protocol and Monte Carlo budgets}
\label{app:mc-budgets}

\begin{table}[h]
\centering
\caption{Monte Carlo sampling budgets per experiment.}
\label{tab:mc-budgets}
\begin{tabular}{llrr}
\toprule
Experiment & Metric & Outer $y$ & Inner/posterior $\theta$ \\
\midrule
Source localization & EIG \& MSE & 10{,}000 & 10{,}000 \\
SIQR               & EIG        & 5{,}000  & 3{,}000  \\
PK                 & EIG        & 5{,}000  & 3{,}000  \\
SIQR               & Robust control & 500  & 40       \\
PK                 & Robust control & 500  & 40      \\
\bottomrule
\end{tabular}
\end{table}

\subsection{Compute resources}
\label{app:compute}
All experiments were run on a single NVIDIA A100 GPU (40\,GB HBM2).
Approximate wall-clock times: source localization $\approx 30$\,min,
SIQR $\approx 60$\,min, PK $\approx 45$\,min (Stage~1 training).
Stage 2 design optimization required approximately 10 min for PK and 20–35 min for SIQR, depending on the risk functional and number of gradient steps.

\section{Model and control formulations}
\label{app:models}
\subsection{Source localization toy model}
We consider a two-dimensional source-localization problem in which a
single sensor must be placed to localize an emitting point source. The unknown
parameter $\btheta \in \mathbb{R}^{2}$ encodes the Cartesian coordinates of the
source, written as $\btheta = (\btheta_x,\btheta_y)^\top$. We place an
independent standard normal prior on each coordinate,
\[
\btheta \sim \mathcal{N}(0,I_{2}).
\]

The design variable is the sensor location $\xi \in \mathbb{R}^2$. For a given
design $\xi$ and parameter $\btheta$, the forward map returns a
three-dimensional summary of the local signal field. Let
\[
d(\xi,\btheta) = \btheta - \xi \in \mathbb{R}^2,\qquad
r^2(\xi,\btheta) = \|d(\xi,\btheta)\|_2^2
\]
denote the offset and squared distance from the sensor to the source. We assign
an inverse-square-type weight
\[
w(\xi,\btheta) = \frac{1}{s_0 + r^2(\xi,\btheta)},
\]
with $s_0>0$ a small regularization constant, and define the total intensity
\[
I(\xi,\btheta)
=
c_{\mathrm{base}} + w(\xi,\btheta),
\]
where $c_{\mathrm{base}}$ is a background level.

We also summarize the bearing from the sensor towards the source. Since there is
only one source, this is simply the direction of $d(\xi,\btheta)$. Let
\[
\bar\psi(\xi,\btheta) = \mathrm{atan2}\bigl(d_y(\xi,\btheta),d_x(\xi,\btheta)\bigr)
\]
denote the polar angle of the offset. To avoid angular discontinuities we
encode this angle via $(\cos\bar\psi,\sin\bar\psi)$. The noiseless observation
is therefore
\[
m(\xi,\btheta)
=
\bigl(\log I(\xi,\btheta),\ \cos\bar\psi(\xi,\btheta),\ \sin\bar\psi(\xi,\btheta)\bigr)
\in\mathbb{R}^3,
\]
and we model the noisy measurement as
\[
y \mid \btheta,\xi
\sim
\mathcal{N}\!\bigl(m(\xi,\btheta), \,
\mathrm{diag}(\sigma_I^2,\sigma_\phi^2,\sigma_\phi^2)\bigr),
\]
with different noise scales for intensity and angular components.

For this toy problem we treat the vertical source coordinate $\btheta_y$ as the
decision-relevant quantity. Given an observation $(\xi,y)$, the downstream
“decision’’ is the scalar estimate $\hat\btheta_y(\xi,y)$ produced by the
amortized posterior network $q_\phi(\btheta\mid\xi,y)$. We use the squared
error in $\btheta_y$ as the loss,
\[
\mathrm{MSE}(\xi,y;\btheta)
=
\bigl(\hat\btheta_y(\xi,y)-\btheta_y\bigr)^2,
\]
and the decision-focused design objective is the expected posterior mean squared
error
\[
J(\xi)
=
\mathbb{E}_{\btheta,y}\bigl[\mathrm{MSE}(\xi,y;\btheta)\bigr],
\]
where the expectation is taken over the prior $p(\btheta)$ and the likelihood
$p(y\mid\btheta,\xi)$.

\subsection{Optimal control for epidemiology}
\label{sec:back-ocforepi}
Building on the BOED framework, we consider robust epidemiology control as an example. Our method leverages the framework established by \cite{talaei2024network}. The epidemiology model is governed by a SIQR spread disease network, where state variables represent different compartments of the population: susceptible ($s$), asymptomatic infected ($x^a$), symptomatic infected ($x^s$) and recovered ($h$). 
\begin{equation}
\begin{pmatrix}
\dot{s} \\
\dot{x}^{a} \\
\dot{x}^{s} \\
\dot{h}
\end{pmatrix} =
\begin{pmatrix}
0 & -\beta^{a}s & -\beta^{s}s & 0 \\
0 & \beta^{a}s - \epsilon - \gamma^{a} - g^a & \beta^{s}s & 0 \\
0 & \epsilon  & -\gamma^{s} - g^s & 0 \\
0 & \gamma^{a} & \gamma^{s} & 0
\end{pmatrix}
\begin{pmatrix}
s \\
x^{a} \\
x^{s} \\
h
\end{pmatrix}.
\end{equation}
Here, $\beta^a$ and $\beta^s$ are transmission rates for asymptomatic and symptomatic cases, $\epsilon$ is the rate at which asymptomatic cases develop symptoms, $\gamma^a$ and $\gamma^s$ are recovery rates for asymptomatic and symptomatic cases, and $g^a$ and $g^s$ are quarantine rates for asymptomatic and symptomatic individuals.

Following \cite{ma2023optimal}, we decouple the dynamics of $\dot{x}$ from $\dot{s}$ and $\dot{h}$, allowing us to focus on the matrix $M(t_0)$ which captures the essential infection dynamics at initial time $t_0$:
\begin{equation}
M(t_0) =
\begin{pmatrix}
\beta^{a}s(t_0) - \epsilon - \gamma^{a} - g^a & \beta^{s}s(t_0) \\
\epsilon & -\gamma^{s} -g^s
\end{pmatrix}\label{eq:M}
\end{equation}

To optimize the quarantine strategy as described in \cite{talaei2024network}, we utilize an objective function that minimizes economic costs:
\begin{equation}
\min_{g^a, g^s} J(g^a,g^s) = \frac{z^a}{1-g^a} + \frac{z^s}{1-g^s} \label{eq:siqr-constraints-main}
\end{equation}
\begin{align}
\text{s.t. }\lambda_{\text{max}}(M(t_0)) &\leq -\alpha, \\
0 \leq g^a & < 1, \\
0 \leq g^s & <  1,
\end{align}
where $z^a$ represents the economic cost for asymptomatic quarantine, $z^s$ represents the economic cost for symptomatic quarantine, and $\alpha>0$ is a constraint ensuring the stability of the system. 

The detailed solution for this minimization problem is provided in \Cref{app:siqr-gradients}. This approach allows us to determine optimal quarantine rates without explicitly integrating the SIQR differential equations. Instead, by analyzing the eigenvalues of the system and applying convex optimization, we can efficiently identify the optimal quarantine strategy that minimizes economic costs.

We follow the parameterization framework of
\citet{talaei2024network}. We place independent log-normal priors on
the transmission and recovery rates (in units of counts per day). In
log-space, the transmission rates for asymptomatic and symptomatic individuals, $\beta^a$ and $\beta^s$, and the recovery rates
$\gamma^a$ and $\gamma^s$ are log-normally distributed as
\[
(\log \beta^a, \log \beta^s, \log \gamma^a, \log \gamma^s)
\sim \mathcal{N}\bigl((0.5,\,0.8,\,0.2,\,0.2),
                      \mathrm{diag}(0.5^2,0.5^2,0.3^2,0.3^2)\bigr).
\]
The stability margin is fixed at $\alpha = 0.05$, and the economic
cost parameters are set to $(z^a,z^s) = (0.4,\,0.6)$.

The design variable $\xi \in [1,100]$ denotes the observation day. In the numerical experiments, we evaluate candidate observation days on the restricted grid $\xi \in \{1,\ldots,15\}$, corresponding to the early observation window considered in \Cref{fig:combined-fig}. At
time $\xi$ we observe
$y_{\text{obs}} = (y_{\text{obs}}^{a},y_{\text{obs}}^{s})$, the counts
of asymptomatic and symptomatic infected individuals. We model these
observations with a Poisson likelihood whose rate is a slightly
perturbed version of the model prediction,
\[
y_{\text{obs}} \mid \xi
\sim \mathrm{Poisson}\bigl(\lambda(\xi)\bigr),
\qquad
\lambda(\xi) = 0.95 \, y_{\text{true}}(\xi),
\]
where $y_{\text{true}}(\xi)$ is the SIQR solution evaluated at time
$\xi$ under a given parameter draw. The amortized variational
posterior $q_\phi(\bbeta,\bgamma \mid y,\xi)$ is trained on simulated
pairs $(y,\xi)$ as described in \Cref{app:NN-arch}, and the resulting
posterior is used inside the robust control layer. The convex
optimization problems for quarantine rates $(g^a,g^s)$ are solved via a semidefinite programming formulation
(see \Cref{app:siqr-gradients} for details).

\subsection{Optimal control for Pharmacokinetic model}
\label{sec:back-ocforpk}

We further consider PK model as another example. 
The concentration at time $t$ for a drug administered orally can be modeled using the Bateman function \citep{bateman1910solution}: 
\[
y(t) = \frac{D \cdot k_a}{V \cdot (k_a - k_e)} \left( e^{-k_e \cdot t} - e^{-k_a \cdot t} \right)(1 + \epsilon_\text{mult}) + \epsilon_\text{add},
\]
where $V$ is the volume of distribution, $k_a$ is the absorption rate constant, $k_e$ is the elimination rate constant, $D$ is the dose administered, $\epsilon_\text{mult}$ is a multiplicative error term, and $\epsilon_\text{add}$ is an additive error term. This formulation captures the dynamics of drug absorption and elimination. 

Given the drug's potential toxicity, dosing should maintain systemic exposure within the therapeutic window, avoiding toxic concentrations while not falling below the minimum effective concentration. The time at which the maximum drug concentration occurs, denoted $t_{\max}$, can be found by setting $\partial y / \partial t = 0$, which yields:
\[
t_{\max} = \frac{\ln(k_a / k_e)}{k_a - k_e}.
\] 

The maximum concentration, $C_{\max}$, is obtained by evaluating $y(t)$ at $t_{\max}$:
\[
C_{\max} = \frac{D}{V} \left( \frac{k_e}{k_a} \right)^{ \frac{k_e}{-k_e + k_a} } (1 + \epsilon_\text{mult}) + \epsilon_\text{add}.
\]


For cumulative exposure, the area under the concentration curve (AUC) is given by:
\[
\text{AUC} = \int_0^\infty y(t) dt = \frac{D}{V \cdot k_e},
\]
assuming complete absorption.

We define a convex cost function $J(g)$ (e.g., $J(g)=c\,g$ to discourage high dosing). The constrained problem can be written in the following forms, 

\begin{align}
\label{eq:cc-pk-appendix}
\min_{\,0\le g\le 1}\quad & J(g) \\
\text{s.t.}\quad
& C_{\max}(g,\btheta)\le C_{\text{thresh}} , \nonumber\\
& \mathrm{AUC}(g,\btheta) \;\ge\; \mathrm{AUC}_{\min}. \nonumber
\end{align}

We place independent log-normal priors
on the absorption rate $k_a$, elimination rate $k_e$, and volume of
distribution $V$. In log-space, the parameters satisfy
\[
(\log k_a, \log k_e, \log V)
\sim \mathcal{N}\bigl((0,\,\log 0.1,\,\log 20.0),
                      \mathrm{diag}(0.2^2,0.2^2,0.2^2)\bigr).
\]
The Bateman PK model, the definitions of $C_{\max}$ and AUC, and the
associated robust control formulations (chance constraints and CVaR)
are given in \Cref{app:pk-risk}. The
same amortized variational inference and importance-weighted posterior
estimation pipeline is used as in the SIQR experiments, with the
control variable $g$ parameterizing the administered dose $D(g)$.

\section{Case Study 2: SIQR Epidemic Control}
\label{app:siqr}

\subsection{Model dynamics, prior, and observation model}
\label{app:siqr-model}

We develop an integrated framework, GoBOED, for epidemic management by
bringing together BOED and optimal
control through convex optimization (described in \cref{eq:opt-theta}). Our
primary objective is to identify an experimental design
$\xi^*$ that minimizes the expected controlled economic cost, where
the expectation is taken over the posterior distributions of the
parameters $\bbeta = (\beta^a,\beta^s)$ and
$\bgamma = (\gamma^a,\gamma^s)$. This allows us to update our beliefs
with new observations and improve decision-making, following the
Bayesian decision-theoretic perspective of \citet{chaloner1995bayesian}.

For a given design $\xi$ and observed data $y$, we consider
\begin{equation}
\min_{g^a,g^s} \,
\mathbb{E}_{p(\bbeta,\bgamma \mid y,\xi)}
\bigl[ J(g^a,g^s) \bigr],
\label{eq:sim-ep}
\end{equation}
subject to the posterior-averaged stability constraint
\begin{equation}
\begin{aligned}
\lambda_{\max}\Bigl(
\mathbb{E}_{p(\bbeta,\bgamma \mid y,\xi)}
\bigl[ M(t_0,\bbeta,\bgamma,g^a,g^s) \bigr]
\Bigr)
&\le -\alpha, \\
0 \le g^a < 1,\quad
0 \le g^s < 1.
\end{aligned}
\label{eq:e-constraints}
\end{equation}
Here, $J(g^a,g^s)$ denotes the economic cost, which does not depend
explicitly on $(\bbeta,\bgamma)$; hence the objective in
\cref{eq:sim-ep} reduces to minimizing $J(g^a,g^s)$ subject to
\cref{eq:e-constraints}. The main difficulty lies in evaluating the
eigenvalue constraint, which involves the posterior expectation of the
matrix $M(t_0,\bbeta,\bgamma,g^a,g^s)$.

To approximate the posterior $p(\bbeta,\bgamma \mid y,\xi)$ we employ
the amortized variational inference scheme introduced in the main
text. In particular, we use the variational family
$q_\phi(\bbeta,\bgamma \mid y,\xi)$ and optimize $\phi$ via the ELBO in
\cref{eq:SVI}. Expectations under the true posterior are then
estimated using the importance-sampling estimator defined in
\cref{eq:is-theta} (see also Appendix~\ref{app:is-estimator}).
For the spectral stability constraint, we take
$f(\bbeta,\bgamma) = M(t_0,\bbeta,\bgamma,g^a,g^s)$ in that estimator,
which yields an approximation of
$\mathbb{E}_{p(\bbeta,\bgamma \mid y,\xi)}[M(t_0,\bbeta,\bgamma,g^a,g^s)]$
and thereby of the eigenvalue constraint in \cref{eq:e-constraints}.

\subsection{Robust quarantine control formulation}
\label{app:siqr-risk}
We instantiate the general risk functionals of
Appendix~\ref{app:risk-impl} via the per-sample stability violation:
\[
v_i(g_a,g_s) = \lambda_{\max}(M(t_0,\beta_i,\gamma_i,g_a,g_s)) + \alpha.
\]

\paragraph{Mean-based.}
Enforce $c(g,\bar\theta) = \lambda_{\max}(M(t_0,\bar\beta,\bar\gamma,g_a,g_s)) + \alpha \le 0$
at the posterior mean $\bar\theta$ (see Appendix~\ref{app:is-estimator}).

\paragraph{Chance-constrained.}
\label{app:siqr-cc}
Use scenario selection of Appendix~\ref{app:cc-saa} with
$f(\theta) = \mathbf{1}[\lambda_{\max}(\cdot) > -\alpha]$.
\begin{equation}
\min_{g^a,g^s} J(g^a,g^s),
\label{eq:simp-form}
\end{equation}
subject to
\begin{align}
\mathbb{P}\!\left(
\lambda_{\max}\bigl(M(t_0,\bbeta,\bgamma,g^a,g^s)\bigr)
\le -\alpha \,\middle|\, y,\xi
\right)
&\ge \eta,
\label{eq:chance-constraint} \\
0 \le g^a < 1, \\
0 \le g^s < 1,
\end{align}
where $(\bbeta,\bgamma)$ are distributed according to
$p(\bbeta,\bgamma \mid y,\xi)$ and $\eta \in (0,1)$ is the desired
confidence level. Plugging
$f(\bbeta,\bgamma) = \mathbf{1}\{\lambda_{\max}(M)\le -\alpha\}$ into the
self-normalized IS estimator (Appendix~\ref{app:is-estimator}) yields
\[
\mathbb{P}\!\left(
\lambda_{\max}\bigl(M(t_0,\bbeta,\bgamma,g^a,g^s)\bigr)
\le -\alpha \,\middle|\, y,\xi
\right)
\approx
\sum_{i=1}^N \tilde{w}_i
\,\mathbf{1}\!\left\{
\lambda_{\max}\bigl(M(t_0,\bbeta_i,\bgamma_i,g^a,g^s)\bigr)
\le -\alpha
\right\}.
\]
Following \cref{eq:cc-scenario}, the posterior samples
$\{(\bbeta_i,\bgamma_i)\}_{i=1}^N\sim q_\phi$ and their IS weights
$\{\tilde w_i\}_{i=1}^N$ are drawn once per $(y,\xi)$, and a scenario
set $\mathcal{S}(y,\xi)$ with $\sum_{i\in\mathcal{S}}\tilde w_i\ge\eta$
is selected by the rule of Appendix~\ref{app:cc-saa} and held fixed
during the inner optimization over $(g^a,g^s)$.

\paragraph{CVaR-based.}
\label{app:siqr-cvar}
Use the epigraph form of Appendix~\ref{app:cvar-saa} with $v_i$ above. As in the chance-constrained case, posterior samples
$\{(\bbeta_i,\bgamma_i)\}_{i=1}^N\sim q_\phi$ and their IS weights
$\{\tilde w_i\}_{i=1}^N$ are drawn once per $(y,\xi)$ and held fixed
during inner optimization.

With decision variables $\tau\in\mathbb{R}$ and $s_i \ge 0$. At
optimality, $s_i = (v_i(g^a,g^s)-\tau)_+$, so only the upper tail
beyond the $\eta$-quantile contributes. The robust controls are then
obtained by solving the convex program
\begin{equation}
\label{eq:cvar-problem-general-app}
\begin{aligned}
\min_{g^a,g^s,\ \tau,\ \{s_i\}} \quad
& J(g^a,g^s)\\
\text{s.t.}\quad
& s_i \ \ge\ v_i(g^a,g^s)-\tau,\quad s_i\ge 0,\quad i=1,\dots,N,\\
& \tau+\frac{1}{1-\eta}\sum_{i=1}^N \tilde w_i\, s_i \ \le\ 0,\\
& 0\le g^a < 1,\quad 0\le g^s < 1,
\end{aligned}
\end{equation}
where $J(g^a,g^s)$ is the same convex objective as in
\Cref{eq:simp-form}. The spectral violation $v_i$ admits a convex
epigraph, so \cref{eq:cvar-problem-general-app} remains a tractable
semidefinite program.

\subsection{SDP formulation, KKT conditions, and gradients}
\label{app:siqr-gradients}

We now derive gradients for the SIQR control problem in
\cref{eq:siqr-constraints-main} using a Lagrangian and KKT conditions.
Here $(\bbeta_i,\bgamma_i)$ denote the sampled transmission and
recovery rates for scenario $i$, as introduced in
\Cref{sec:back-ocforepi}.

To address this constrained optimization problem, we introduce the Lagrangian:
\begin{equation*}
\begin{aligned}
\mathcal{L} = \frac{z^a}{1 - g^a} + \frac{z^s}{1 - g^s}
&+ \sum_{i=1}^N \lambda_i(\lambda_{\text{max}}(M(t_0,\bbeta_i,\bgamma_i, g^a, g^s)) + \alpha)
\\&- \mu_1 g^a + \mu_2(g^a - 1) - \mu_3 g^s + \mu_4(g^s - 1),
\end{aligned}
\end{equation*}
where $\lambda_i$ are Lagrange multipliers associated with the eigenvalue constraints, and $\mu_i$ are multipliers for the box constraints on $g^a$ and $g^s$. The optimal solution must satisfy the Karush--Kuhn--Tucker (KKT) conditions, which we outline below.

\subsubsection{KKT conditions}

\paragraph{Stationarity}
The stationarity conditions are derived by taking partial derivatives of the Lagrangian:
\begin{align*}
\frac{\partial \mathcal{L}}{\partial g^a} &= \frac{z^a}{(1 - g^a)^2} + \sum_{i=1}^N \lambda_i \frac{\partial \lambda_{\text{max}}(M(t_0, \bbeta_i, \bgamma_i, g^a, g^s))}{\partial g^a} + \mu_2 - \mu_1 = 0, \\
\frac{\partial \mathcal{L}}{\partial g^s} &= \frac{z^s}{(1 - g^s)^2} + \sum_{i=1}^N \lambda_i \frac{\partial \lambda_{\text{max}}(M(t_0, \bbeta_i, \bgamma_i, g^a, g^s))}{\partial g^s} + \mu_4 - \mu_3 = 0.
\end{align*}

\paragraph{Primal feasibility}
The primal feasibility conditions ensure the constraints hold:
\begin{align*}
\lambda_{\text{max}}(M(t_0, \bbeta_i, \bgamma_i, g^a, g^s)) &\leq -\alpha \quad \text{for} \quad i = 1, \ldots, N, \\
0 \leq g^a < 1, \\
0 \leq g^s < 1.
\end{align*}

\paragraph{Dual feasibility}
The Lagrange multipliers must be non-negative:
\[
\lambda_i \geq 0 \quad \text{for} \quad i = 1, \ldots, N, \quad \text{and} \quad \mu_1, \mu_2, \mu_3, \mu_4 \geq 0.
\]

\paragraph{Complementary slackness}
The complementary slackness conditions are:
\begin{align*}
\lambda_i \left( \lambda_{\text{max}}(M(t_0, \bbeta_i, \bgamma_i, g^a, g^s)) + \alpha \right) &= 0 \quad \text{for} \quad i = 1, \ldots, N, \\
\mu_1 g^a &= 0, \\
\mu_2 (1 - g^a) &= 0, \\
\mu_3 g^s &= 0, \\
\mu_4 (1 - g^s) &= 0.
\end{align*}

Assuming an interior solution, the multipliers for the box constraints at the optimal point become \(\mu_i = 0\). The KKT conditions simplify to:
\[
\frac{z^a}{(1 - {g^a}^*)^2} + \sum_{i=1}^N \hat{\lambda}_i \frac{\partial \lambda_{\text{max}}(M(t_0, \bbeta_i, \bgamma_i, {g^a}^*, {g^s}^*))}{\partial g^a} = 0,
\]
\[
\frac{z^s}{(1 - {g^s}^*)^2} + \sum_{i=1}^N \hat{\lambda}_i \frac{\partial \lambda_{\text{max}}(M(t_0, \bbeta_i, \bgamma_i,{g^a}^*, {g^s}^*))}{\partial g^s} = 0,
\]
\[
\lambda_{\text{max}}(M(t_0, \bbeta_{i^*}, \bgamma_{i^*}, {g^a}^*, {g^s}^*)) = -\alpha \quad \text{for specific} \quad i^*,
\]
where $\hat{\lambda}_i$ denotes the optimal Lagrange multipliers.

We compute the optimal values ${g^a}^*$, ${g^s}^*$, and $\hat{\lambda}_{i^*}$ using convex optimization via semidefinite programming. This is feasible because \(J\) is a convex function with respect to \(g^a\) and \(g^s\), and the largest eigenvalue constraint can be reformulated as a set of linear matrix inequalities.

\subsubsection{Derivative of the optimum cost \(J^\ast\) with respect to model parameters}
By Appendix~\ref{app:envelope}, the gradient of the optimum cost \(J^\ast\) with respect to the model parameters \(\bbeta\) and \(\bgamma\) follows from the envelope identity. The derivative with respect to \(\beta_i^a\) is:
\[
\frac{\partial J^\ast}{\partial \beta_i^a}
= \hat{\lambda}_i \cdot \,
\frac{\partial \lambda_{\text{max}}\bigl(M(t_0, \bbeta_i, \bgamma_i, {g^a}^*, {g^s}^*)\bigr)}{\partial \beta_i^a}.
\]

Similarly, the derivatives with respect to other parameters are:
\[
\frac{\partial J^\ast}{\partial \gamma_i^s} = \hat{\lambda}_i \cdot \frac{\partial \lambda_{\text{max}}(M(t_0, \bbeta_i, \bgamma_i, {g^a}^*, {g^s}^*))}{\partial \gamma_i^s},
\]
\[
\frac{\partial J^\ast}{\partial \gamma_i^a} = \hat{\lambda}_i \cdot \frac{\partial \lambda_{\text{max}}(M(t_0, \bbeta_i, \bgamma_i, {g^a}^*, {g^s}^*))}{\partial \gamma_i^a},
\]
\[
\frac{\partial J^\ast}{\partial \beta_i^s} = \hat{\lambda}_i \cdot \frac{\partial \lambda_{\text{max}}(M(t_0, \bbeta_i, \bgamma_i, {g^a}^*, {g^s}^*))}{\partial \beta_i^s}.
\]

Since the optimal solution often lies on specific boundaries where \(\hat{\lambda}_{i^*} \neq 0\) and \(\hat{\lambda}_i = 0\) for \(i \neq i^*\), the gradient depends only on the samples directly influencing the solution.

\subsubsection{Derivative of the constraints with respect to experimental design}

The constraint term is defined as:
\[
l(\bbeta, \bgamma, g^a, g^s ;y, \xi) = \sum_{i=1}^N \lambda_i \left( \lambda_{\text{max}}(M(t_0, \bbeta_i, \bgamma_i, {g^a}, {g^s})) + \alpha \right) - \mu_1 g^a + \mu_2(g^a - 1) - \mu_3 g^s + \mu_4(g^s - 1).
\]

Its gradient with respect to \(\xi\) at the optimal point is:
\begin{equation}
    \begin{aligned}
\nabla_{\xi} l = \sum_{i=1}^N \hat{\lambda}_i \left( \frac{\partial \lambda_{\text{max}}}{\partial \beta_i^a} \cdot \nabla_{\xi} \beta_i^a + \frac{\partial \lambda_{\text{max}}}{\partial \beta_i^s} \cdot \nabla_{\xi} \beta_i^s + \frac{\partial \lambda_{\text{max}}}{\partial \gamma_i^a} \cdot \nabla_{\xi} \gamma_i^a + \frac{\partial \lambda_{\text{max}}}{\partial \gamma_i^s} \cdot \nabla_{\xi} \gamma_i^s \right).
    \end{aligned}
\end{equation}

The partial derivative \(\frac{\partial \lambda_{\text{max}}}{\partial \beta_i^a}\) is given by \citet{talaei2024network} as:
\[
\frac{\partial \lambda_{\text{max}}}{\partial \beta_i^a} = \frac{v_{\text{max}}^T \left( \frac{\partial M(t_0, \bbeta_i)}{\partial \beta_i^a} \right) u_{\text{max}}}{v_{\text{max}}^T u_{\text{max}}},
\]
where \(v_{\text{max}}\) and \(u_{\text{max}}\) are the left and right eigenvectors of the largest eigenvalue, respectively. The terms \(\nabla_{\xi} \beta_i^a\), \(\nabla_{\xi} \beta_i^s\), \(\nabla_{\xi} \gamma_i^a\), and \(\nabla_{\xi} \gamma_i^s\) are computed via automatic differentiation from the variational network.

\subsubsection{Marginal likelihood gradient}
\label{sec:marlikgrad}
To compute the log-likelihood gradient with respect to \(\xi\), we assume a Poisson observation model (suitable for count data) with rate parameter $\lambda = 0.95 \cdot y_{\text{true}}(\xi)$. The gradient is:
\[
\frac{\partial}{\partial \xi} \log p(y_{\text{obs}} \mid \xi) = \left( \frac{y_{\text{obs}}}{0.95 \cdot y_{\text{true}}(\xi)} - 1 \right) \cdot 0.95 \cdot \frac{\partial y_{\text{true}}(\xi)}{\partial \xi}.
\]

This expression helps quantify how changes in \(\xi\) affect the likelihood of the observed data. The term \(\frac{\partial y_{\text{true}}(\xi)}{\partial \xi}\) can be approximated using finite difference methods, such as the central difference method.

\section{Case Study 3: Pharmacokinetic Control}
\label{app:pk}

\subsection{Bateman model, $C_{\max}$, AUC, and prior}
\label{app:pk-model}

We further consider the PK model as another example.
The concentration at time $t$ for a drug administered orally can be modeled using the Bateman function \citep{bateman1910solution}:
\[
y(t) = \frac{D \cdot k_a}{V \cdot (k_a - k_e)} \left( e^{-k_e \cdot t} - e^{-k_a \cdot t} \right)(1 + \epsilon_\text{mult}) + \epsilon_\text{add},
\]
where $V$ is the volume of distribution, $k_a$ is the absorption rate constant, $k_e$ is the elimination rate constant, $D$ is the dose administered, $\epsilon_\text{mult}$ is a multiplicative error term, and $\epsilon_\text{add}$ is an additive error term. This formulation captures the dynamics of drug absorption and elimination.

Given the drug's potential toxicity, dosing should maintain systemic exposure within the therapeutic window---avoiding toxic concentrations while not falling below the minimum effective concentration. The time at which the maximum drug concentration occurs, denoted $t_{\max}$, can be found by setting the derivative $\partial y / \partial t = 0$, which yields:
\[
t_{\max} = \frac{\ln(k_a / k_e)}{k_a - k_e}.
\]

The maximum concentration, $C_{\max}$, is obtained by evaluating $y(t)$ at $t_{\max}$:
\[
C_{\max} = \frac{D}{V} \left( \frac{k_e}{k_a} \right)^{ \frac{k_e}{-k_e + k_a} } (1 + \epsilon_\text{mult}) + \epsilon_\text{add}.
\]

For cumulative exposure, the area under the concentration curve (AUC) is given by:
\[
\text{AUC} = \int_0^\infty y(t)\,dt = \frac{D}{V \cdot k_e},
\]
assuming complete absorption.

We define a convex cost function $J(g)$ (e.g., $J(g)=c\,g$ to discourage high dosing). The constrained problem can be written in the following form,
\begin{align}
\label{eq:cc-pk-ref}
\min_{\,0\le g\le 1}\quad & J(g) \\
\text{s.t.}\quad
& C_{\max}(g,\btheta)\le C_{\text{thresh}} , \nonumber\\
& \mathrm{AUC}(g,\btheta) \;\ge\; \mathrm{AUC}_{\min}. \nonumber
\end{align}

We follow the parameterization in
\citet{kleinegesse2021gradient}. We place independent log-normal priors
on the absorption rate $k_a$, elimination rate $k_e$, and volume of
distribution $V$. In log-space, the parameters satisfy
\[
(\log k_a, \log k_e, \log V)
\sim \mathcal{N}\bigl((0,\,\log 0.1,\,\log 20.0),
                      \mathrm{diag}(0.2^2,0.2^2,0.2^2)\bigr).
\]
The same amortized variational inference and importance-weighted
posterior estimation pipeline is used as in the SIQR experiments
(Appendix~\ref{app:siqr}), with the control variable $g$ parameterizing
the administered dose $D(g)$.

\paragraph{Parameter influence.}
For the PK model, the peak concentration $C_{\max}$ scales roughly as
$D/V$ and increases with $k_a$ (faster absorption, smaller
$t_{\max}$), decreases with $k_e$ (faster elimination), and is
sensitive to the ratio $k_a/k_e$ (flip--flop kinetics when $k_a<k_e$).

We follow the same design--control split as in the SIQR example. The
design $\xi$ (e.g., blood sampling schedule and/or formulation choice)
is used to learn the PK parameters
$\btheta=(k_a,k_e,V)$ via the posterior $p(\btheta \mid y,\xi)$. The
control variable $g\in[0,1]$ is a dose fraction, with the administered
dose given by $D(g)$ (e.g., $D(g)=gD_0$ or a more general convex
mapping). All exposure and risk quantities below depend on $g$ through
$D(g)$ and on $\btheta$. Posterior expectations in the PK setting are
computed using the self-normalized IS estimator from 
\cref{eq:is-theta} (see also Appendix~\ref{app:is-estimator}), now with
$\btheta$ in place of $(\bbeta,\bgamma)$.

\subsection{Robust dose-control formulation}
\label{app:pk-risk}
We instantiate Appendix~\ref{app:risk-impl} via:
\[
v_i(g) = C_{\max}(g,\theta^{(i)}) - C_{\mathrm{thresh}},
\]
with an AUC exposure constraint
$\widehat{\mathbb{E}}[\mathrm{AUC}(g,\theta)] \ge \mathrm{AUC}_{\min}$
(Appendix~\ref{app:is-estimator}).

\paragraph{Chance-constrained.}
\label{app:pk-cc}
\begin{align}
\label{eq:cc}
\min_{\,0\le g\le 1}\quad & J(g) \\
\text{s.t.}\quad
& \mathbb{P}\!\left(
    C_{\max}(g,\btheta) \le C_{\text{thresh}}
    \,\middle|\, y,\xi
  \right) \;\ge\; \eta, \nonumber\\
& \mathbb{E}\!\left[ \mathrm{AUC}(g,\btheta) \right]
  \;\ge\; \mathrm{AUC}_{\min}. \nonumber
\end{align}
Both the probability and the expectation are evaluated using
\cref{eq:is-theta}, with
$f(\btheta)=\mathbf{1}\{C_{\max}(g,\btheta)\le C_{\text{thresh}}\}$ and
$f(\btheta)=\mathrm{AUC}(g,\btheta)$, respectively (Appendix~\ref{app:cc-saa}).

\paragraph{CVaR-based.}
\label{app:pk-cvar}
With the per-sample toxicity violation (optionally with a nonnegative
margin $\alpha$)
\begin{equation}
\label{eq:viol}
v_i(g)
= C_{\max}(g,\btheta_i) - C_{\text{thresh}} + \alpha,
\qquad i=1,\dots,N,
\end{equation}
and normalized importance weights $\tilde w_i$ from
\cref{eq:is-theta}, we enforce $\mathrm{CVaR}_\eta(g)\le 0$ via the
epigraph form of Appendix~\ref{app:cvar-saa}:
\begin{align}
\label{eq:cvar}
\min_{\,0\le g\le 1,\;\tau,\;s_i\ge 0}\quad & J(g) \\
\text{s.t.}\quad
& s_i \;\ge\; v_i(g)-\tau,\qquad i=1,\dots,N, \nonumber\\
& \tau \;+\; \frac{1}{1-\eta}\sum_{i=1}^{N} \tilde{w}_i\, s_i \;\le\; 0, \nonumber\\
& \mathbb{E}\!\left[\mathrm{AUC}(g,\btheta)\right] \;\ge\; \mathrm{AUC}_{\min}. \nonumber
\end{align}
Here the expectation in the AUC constraint is again evaluated using
\cref{eq:is-theta}. This yields a PK control policy that trades off dose
cost against a tail-robust toxicity constraint and a minimum-exposure
requirement.

\subsection{Gradient derivations}
\label{app:pk-gradients}
Gradients follow from Appendix~\ref{app:envelope} with
$\nabla_\theta C_{\max}$ and $\nabla_\theta \mathrm{AUC}$
computed via automatic differentiation through the Bateman model.

\section{Numerical details}
\label{sec:NS-details}

\subsection{Additional posterior visualization result}
\label{sec:additionalvis}

\begin{figure}[t]
    \centering
    \includegraphics[width=\linewidth]{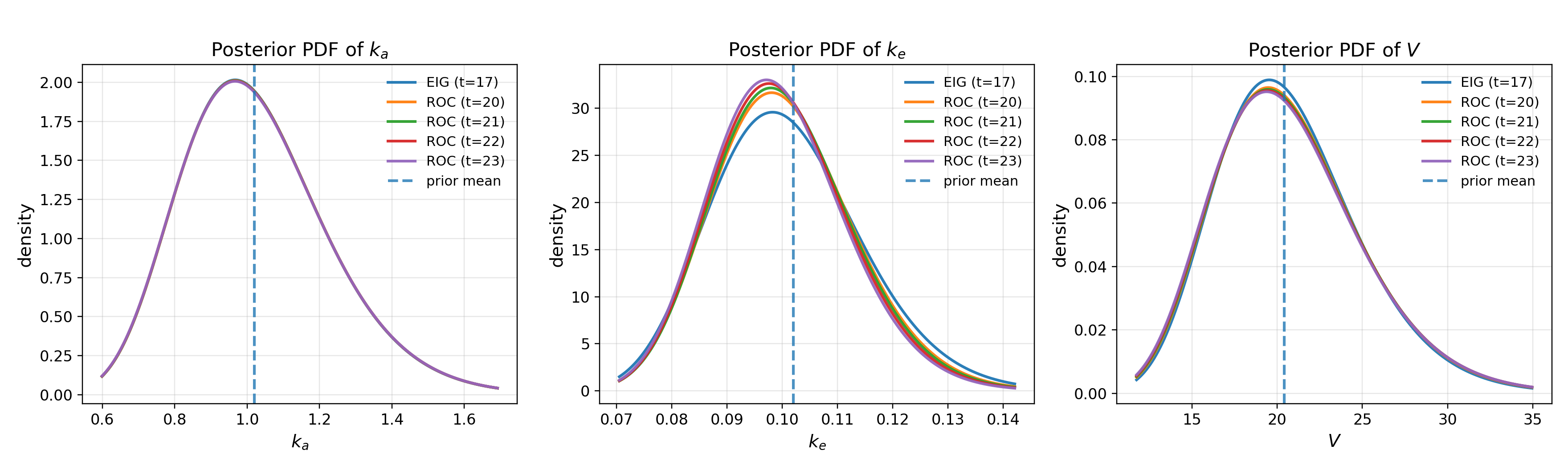}
    \caption{
    Posterior densities for the PK parameters $k_a$, $k_e$, and $V$ under different optimal designs.}
    \label{fig:pk-posteriors-full}
\end{figure}

\begin{figure}[t]
    \centering
    \includegraphics[width=\linewidth]{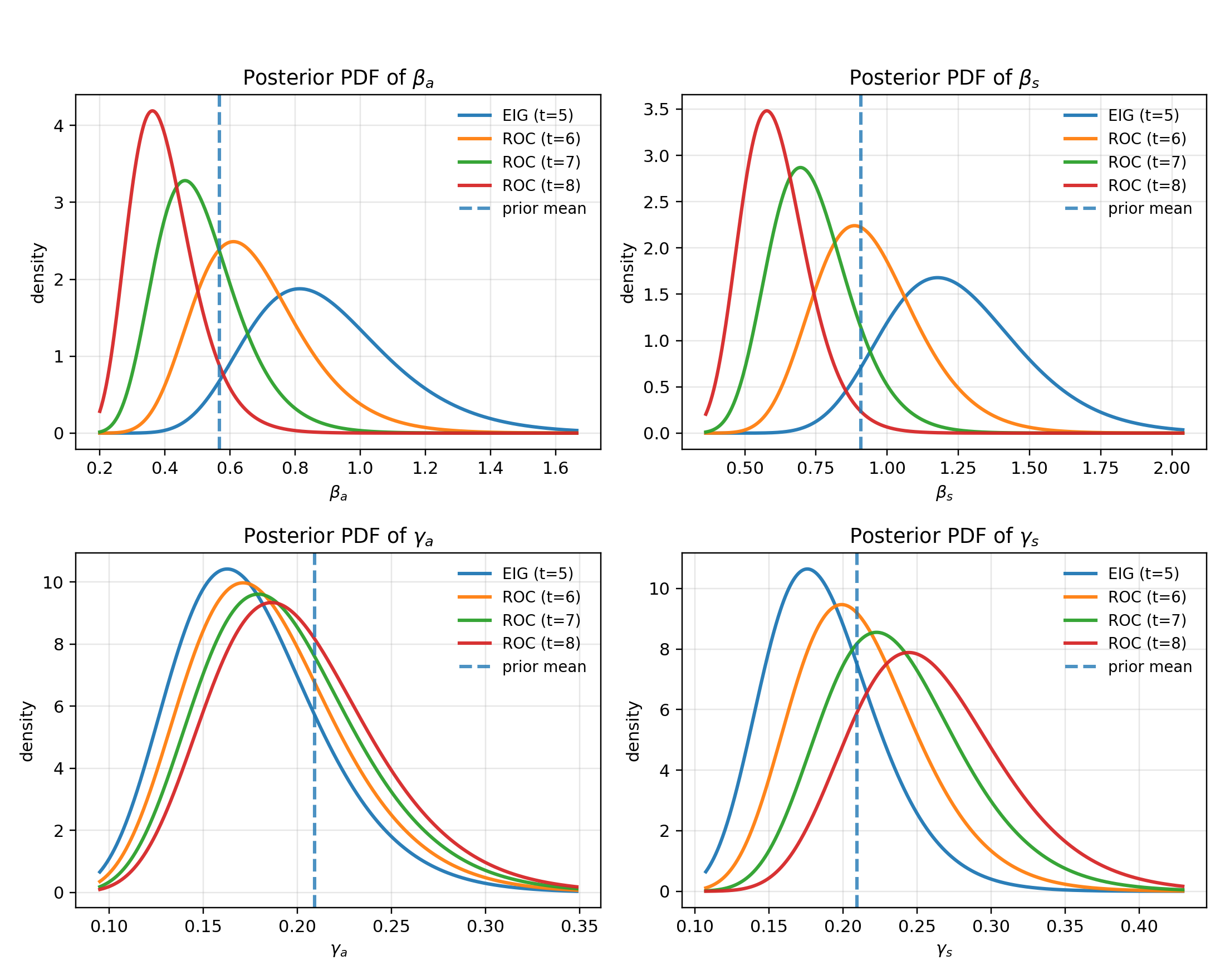}
    \caption{
    Posterior densities for the SIQR model parameters $(\beta_a,\beta_s,\gamma_a,\gamma_s)$ under different observation times.
    }
    \label{fig:siqr-posteriors}
\end{figure}

This appendix provides additional posterior visualizations for both the PK and SIQR models. In particular,
\Cref{fig:siqr-posteriors} shows the marginal posterior densities of the SIQR parameters under different designs.
The blue curves correspond to the EIG-optimal design at $t=5$, while the orange, green, and red curves correspond to representative GoBOED-related observation times near the robust-control favorable region. The dashed vertical
line denotes the prior mean, which is also used as the data-generating parameter in the forward model and therefore
serves as a common reference point for all posterior estimates. Compared to the EIG design, the GoBOED designs
produce posteriors that are more concentrated and, for the transmission and recovery rates, systematically shifted,
suggesting that GoBOED prioritizes parameter regions that most strongly affect the induced control policy.

\section{Extended Related Work}
\label{app:related_work}
\textbf{Goal-oriented Bayesian optimal experimental design} Recent advances in BOED have shifted focus from parameter estimation to optimizing for specific quantities of interest (QoIs), which are measurable outcomes that directly impact decision-making. For linear models, \citet{attia2018goal} established the framework for goal-oriented optimal design of experiments (GOODE) that simplifies computational evaluation for experimental design. Building on this work, \citet{neuberger2024goal} introduced a ``$G_q$-optimality'' criterion based on quadratic approximation of goal functionals for PDE-governed linear inverse problems. Additionally, for Bayesian linear inverse problems, \citet{madhavan2025control} developed a control-oriented approach that connects optimal control and sensor placement while prioritizing uncertainty reduction in controlled state variables. The linearity in these models makes the problems mathematically tractable and computationally efficient to solve. However, many real-world systems, including epidemic models, exhibit significant nonlinearities that require more sophisticated approaches.

For handling non-linear models, \citet{zhong2024goal} created a computational framework using nested Monte Carlo estimators, Markov chain Monte Carlo (MCMC), kernel density estimation, and Bayesian optimization to address both non-linear observation models and prediction models. Similarly, \citet{pmlr-v206-bickfordsmith23a} proposed the expected predictive information gain (EPIG), an acquisition function that measures information gain in the space of predictions rather than parameters. Taking a different approach, \citet{huang2024amortized} introduced a decision-aware framework with a transformer neural decision process that simultaneously generates experimental designs and infers decisions in a unified workflow. For causal discovery problems, \citet{tigas2022interventions} developed methods to optimize intervention timing for large nonlinear structural causal models. Collectively, these works represent a paradigm shift toward experimental designs that optimize directly for decision-relevant outcomes rather than intermediate parameter estimates. 

\textbf{Bayesian decision theory} Bayesian decision theory, which applies observed data to update posterior distributions for optimal decision-making, was formalized in \citet{chaloner1995bayesian}. Building on this foundation, \citet{pmlr-v15-lacoste_julien11a} developed a method that calibrates approximate inference techniques according to specific decision tasks using the Expectation-Maximization algorithm. For modern machine learning applications, \citet{NEURIPS2020_d3d94468} introduced a differentiable approach that balances accuracy against uncertainty calibration, enabling models to learn well-calibrated uncertainties while improving performance. Addressing computational efficiency challenges, \citet{gordon2018meta} developed a framework that uses few-shot learning to simplify posterior inference of task-specific parameters, eliminating the need for gradient-based optimization during testing. These advances have progressively made Bayesian decision-making more practical for complex problems with computational constraints.

\textbf{Robust decision-making} With the growing interest in goal-oriented BOED, robust decision-making has been studied in many application domains. For example, compartmental network-based approaches (e.g., SIQR model) are widely adopted in epidemic management. Two main control strategies dominate current research: optimal control to minimize infection rates \citep{lee2010optimal, hayhoe2021multitask, khanafer2014optimal, liu2020optimal, bock2018optimal} and spectral optimization for resource allocation \citep{hota2021closed, mai2018distributed, smith2023convex, preciado2014optimal, enyioha2015distributed}. A significant challenge with these approaches is their computational complexity, as many of the underlying problems are NP-complete or NP-hard~\citep{van2011decreasing}. 
In a parallel vein, PK models play a crucial role in optimizing drug dosing and improving patient outcomes by quantitatively linking individual variability to clinical efficacy and safety \citep{agema2025prospective, lai2022recent}. These models help guide dose selection and treatment personalization, especially under uncertainty in drug absorption, metabolism, and patient response \citep{zavvrelova2025population, norris2023large}.

\section{Structural Properties of GoBOED}
\label{app:structural}

This appendix proves \cref{thm:null-space-cancellation}: the GoBOED
pathwise gradient is insensitive to the null subspace $I-P^\ast$, so only
task-relevant parameter directions drive design updates.

\subsection{Setup}
\label{app:structural:setup}

$\btheta\in\mathbb{R}^d$ denotes the full parameter vector.
$P^\ast\in\mathbb{R}^{d\times d}$ is the orthogonal projection of rank
$k\le d$ onto the task-relevant subspace
\[
\mathcal{S}^\ast
= \mathrm{span}\{\nabla_{\btheta} c_j(g,\btheta) :
  g\in\mathcal{G},\,\btheta\in\mathbb{R}^d,\,j=1,\dots,m\},
\]
uniquely determined by the constraint functions $\{c_j\}$ alone.
Posterior samples are obtained by reparameterization,
$\btheta^{(i)} = h(y,\xi,\epsilon^{(i)})$ with $\epsilon^{(i)}\sim p(\epsilon)$.

\subsection{Assumptions}
\label{app:structural:assumptions}

\begin{assumption}[Task-Relevant Subspace]
\label{asm:subspace}
There exist a rank-$k$ orthogonal projection $P^\ast$ and functions
$\tilde c_j$ such that
\[
c_j(g,\btheta) = \tilde c_j(g, P^\ast\btheta)
\qquad\text{for all } g\in\mathcal{G},\,\btheta\in\mathbb{R}^d,\,j=1,\dots,m.
\]
Equivalently, $\nabla_{\btheta} c_j(g,\btheta)\in\mathrm{Im}(P^\ast)$
for all $g,\btheta,j$.
\end{assumption}

\begin{assumption}[Pathwise Gradient]
\label{asm:pathwise}
The optimal value $J^*(y,\xi)$ is differentiable with respect to the reparameterized samples $\theta^{(i)}$ a.e., and the optimal $g^*$ satisfies KKT conditions with bounded multipliers $\lambda_j^*$ a.e.
\end{assumption}

\subsection{Proof of Theorem~\ref{thm:null-space-cancellation}}
\label{app:structural:proof}

\begin{lemma}[Gradient Null-Space Cancellation]
\label{lem:null-space}
Under Assumption ~\ref{asm:subspace} and ~\ref{asm:pathwise},
\[
\frac{\partial J^\ast(y,\xi)}{\partial \btheta^{(i)}}\,(I-P^\ast) = 0,
\qquad
\frac{d J^\ast(y,\xi)}{d\xi}
= \sum_{i=1}^N
  \frac{\partial J^\ast}{\partial \btheta^{(i)}}\,P^\ast\,
  \frac{\partial \btheta^{(i)}}{\partial \xi}.
\]
\end{lemma}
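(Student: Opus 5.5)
The argument runs through the envelope identity of \cref{app:envelope} and the factorization in \cref{asm:subspace}. We fix $(y,\xi)$ and the base noise $\{\epsilon^{(i)}\}$, and treat the IS weights and scenario set as constants, as stipulated. The robust program (mean, scenario chance-constrained, or CVaR epigraph) then depends on the sample $\btheta^{(i)}$ only through the constraint values $c_j(\vect g,\btheta^{(i)})$. In the CVaR case these enter via $s_i \ge c_j(\vect g,\btheta^{(i)})-\tau_j$. In the mean case they enter through $\bar\btheta=\sum_i\tilde w_i\btheta^{(i)}$, so the chain rule contributes an extra constant factor $\tilde w_i$.

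By \cref{asm:pathwise}, $J^\ast$ is differentiable in $\btheta^{(i)}$ almost everywhere and the KKT multipliers are bounded. We will therefore write the Lagrangian and apply the envelope theorem, which gives
\[
\frac{\partial J^\ast}{\partial \btheta^{(i)}} = \sum_{j} \lambda_{ij}^{\ast}\, \nabla_{\btheta} c_j(\vect g^\ast,\btheta^{(i)})^{\top}.
\]
Here $\lambda^\ast_{ij}$ is the multiplier attached to the $(i,j)$ constraint, scaled by $\tilde w_i$ in the mean case. The box and $\tau$ constraints drop out because they do not involve $\btheta$.

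Next we use \cref{asm:subspace}. Differentiating $c_j(\vect g,\btheta)=\tilde c_j(\vect g,P^\ast\btheta)$ gives $\nabla_{\btheta}c_j = P^{\ast\top}\nabla\tilde c_j = P^\ast\nabla\tilde c_j$, since an orthogonal projection is symmetric. So every constraint gradient lies in $\mathrm{Im}(P^\ast)$, and $(I-P^\ast)P^\ast=0$ forces $\nabla_{\btheta} c_j^\top (I-P^\ast)=0$. Multiplying the envelope expression on the right by $(I-P^\ast)$ then yields the first identity in \cref{eq:null-space}. For the second identity we start from \cref{eq:dJdxi}. The term $\partial J^\ast/\partial\xi$ vanishes, since $\xi$ enters only through the samples once the weights are frozen. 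We insert $I=P^\ast+(I-P^\ast)$ between $\partial J^\ast/\partial\btheta^{(i)}$ and $\partial\btheta^{(i)}/\partial\xi$, and the first identity kills the second piece.

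The closing claim follows by considering any perturbation of the reparameterization map that changes only $(I-P^\ast)\btheta^{(i)}$, for example shrinking the variance in that subspace. Its derivative in $\xi$ then lies in $\mathrm{Im}(I-P^\ast)$ and contributes zero to $dJ^\ast/d\xi$, even though it can increase EIG.

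The main obstacle is justifying the envelope formula, because $J^\ast$ need not be smooth. The $\lambda_{\max}$ constraints and the scenario/CVaR max structures are nondifferentiable, and multipliers can be non-unique. My first attempt would avoid the envelope theorem entirely. Under \cref{asm:subspace}, $J^\ast$ is itself a function of $\{P^\ast\btheta^{(i)}\}$ only: the whole optimization problem, including its feasible set, is unchanged when $\btheta^{(i)}\mapsto\btheta^{(i)}+v$ with $v\in\ker P^\ast$. Hence $J^\ast(\btheta^{(i)}+tv)$ is constant in $t$, and at any point of differentiability, which holds almost everywhere by \cref{asm:pathwise}, the directional derivative along $v$ is zero. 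This invariance argument is cleaner and needs only differentiability. I would present it as the main proof and give the KKT/envelope form as the interpretation matching \cref{app:envelope}.
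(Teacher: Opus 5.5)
Your proposal is correct, and your main argument for the first identity differs from the paper's. The paper's proof is exactly what you relegate to the ``interpretation.'' It invokes the envelope identity $\partial J^\ast/\partial\btheta^{(i)}=\sum_j\lambda_j^\ast\nabla_{\btheta}c_j(\vect g^\ast,\btheta^{(i)})$ and uses the gradient form of Assumption~\ref{asm:subspace} to write $\nabla_{\btheta}c_j=P^\ast\nabla_{\btheta}c_j$. It then kills this term with $P^\ast(I-P^\ast)=0$ and inserts $I=P^\ast+(I-P^\ast)$ in the chain rule, just as you do. Your invariance argument instead uses the factorization form $c_j=\tilde c_j(\vect g,P^\ast\btheta)$ directly. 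With the weights and scenario set frozen, the whole robust program, and hence its optimal value, is unchanged under $\btheta^{(i)}\mapsto\btheta^{(i)}+tv$ for $v\in\ker P^\ast$. So wherever $J^\ast$ is differentiable, its gradient annihilates $\mathrm{Im}(I-P^\ast)$.

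Your route buys generality and rigor. It needs only differentiability of $J^\ast$ at the point, and it uses neither convexity nor the KKT/bounded-multiplier part of Assumption~\ref{asm:pathwise}. It also sidesteps the sensitivity-analysis regularity that the paper needs but takes for granted when citing the envelope theorem: constraint qualification, unique multipliers, and smooth constraints, none of which is automatic with $\lambda_{\max}$ constraints. The paper's route, in exchange, gives an explicit multiplier formula that matches what \texttt{cvxpylayers} computes and shows that only active scenarios contribute. For that formula, your per-scenario multipliers $\lambda_{ij}^\ast$ and the extra $\tilde w_i$ factor in the mean case are more accurate than the paper's single-index $\lambda_j^\ast$. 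Both routes depend equally on freezing the IS weights and scenario set, because the true $\tilde w_i$ depend on all of $\btheta^{(i)}$ through the likelihood and prior.
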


\begin{proof}
\textit{Gradient via the envelope theorem.}
By Assumption ~\ref{asm:pathwise}, gradient signal flows through the reparameterized samples $\btheta^{(i)}$. The envelope theorem gives
\[
\frac{\partial J^\ast}{\partial \btheta^{(i)}}
= \sum_{j=1}^m \lambda_j^\ast\,\nabla_{\btheta} c_j(g^\ast, \btheta^{(i)}).
\]

\textit{Null-space cancellation.}
By Assumption ~\ref{asm:subspace}, $\nabla_{\btheta} c_j \in \mathrm{Im}(P^\ast)$, so $\nabla_{\btheta} c_j = P^\ast\nabla_{\btheta} c_j$. Right-multiplying by $(I-P^\ast)$ and using $P^\ast(I-P^\ast)=0$ gives 
\[
\frac{\partial J^\ast}{\partial \btheta^{(i)}}(I-P^\ast) = 0.
\]

\textit{Chain rule.}
Decomposing $\partial\btheta^{(i)}/\partial\xi = P^\ast\,\partial\btheta^{(i)}/\partial\xi + (I-P^\ast)\,\partial\btheta^{(i)}/\partial\xi$,
the null-space term vanishes and 
\[
\frac{d J^\ast}{d\xi}
= \sum_{i=1}^N
  \frac{\partial J^\ast}{\partial \btheta^{(i)}}\,P^\ast\,
  \frac{\partial \btheta^{(i)}}{\partial \xi}. \qedhere
\]
\end{proof}

\newpage

\end{document}

%% file: math_commands.tex
\usepackage{amsmath,amsfonts,bm}

\newcommand{\bbeta}{\boldsymbol{\beta}}

\newcommand{\bgamma}{\boldsymbol{\gamma}}
\newcommand{\btheta}{\boldsymbol{\theta}}

\newcommand{\vect}[1]{\bm{#1}}

\newcommand{\squishlist}{
 \begin{list}{$\bullet$}
 { \setlength{\itemsep}{0pt}
 \setlength{\parsep}{3pt}
 \setlength{\topsep}{3pt}
 \setlength{\partopsep}{0pt}
 \setlength{\leftmargin}{1.5em}
 \setlength{\labelwidth}{1em}
 \setlength{\labelsep}{0.5em} } }

\newcommand{\squishend}{
  \end{list}  }
  
\def\eqref#1{equation~\ref{#1}}

\def\1{\bm{1}}

\DeclareMathAlphabet{\mathsfit}{\encodingdefault}{\sfdefault}{m}{sl}
\SetMathAlphabet{\mathsfit}{bold}{\encodingdefault}{\sfdefault}{bx}{n}

